\newcommand{\lv}[1]{#1}
\newcommand{\sv}[1]{}
\newenvironment{myitemize}{\begin{list}{$\bullet$}{\setlength{\leftmargin}{0.15cm}
\setlength{\itemindent}{\labelwidth}}}
{\end{list}}
\newtheorem{THE}{Theorem}
\newtheorem{lemma}{Lemma}
\newtheorem{definition}{Definition}
\newtheorem{proposition}{Proposition}
\newtheorem{example}{Example}
\newcommand {\nop}[1] {}
\newcommand{\N}{\mathbb{N}}
\newcommand{\SigmaP}[1]{{\rm \Sigma}_{#1}^{P}}
\newcommand{\union}{\oplus}
\newcommand{\relabel}[2]{\rho_{#1 \rightarrow  #2}}
\newcommand{\edge}[3]{\eta_{#1,  #2}^#3}
\newcommand{\CWk}{\mathit{CW}\!_k}
\newcommand{\bigO}[1]{\mathcal{O}(#1)}
\newcommand{\cyclerank}{\textup{cr}}
\newcommand{\uncyclerank}{\textup{cr}^{\leftrightarrow}}
\newcommand{\depG}{\textup{DEP}}
\newcommand{\incG}{\textup{INC}}
\newcommand{\undG}{\textup{UND}}
\newcommand{\biG}{\textup{DI}}
\newcommand{\sincG}{\textup{SINC}}
\newcommand{\QBFF}{\textup{QBF}_{2,\exists}^{\textbf{DNF}}}
\def\hy{\hbox{-}\nobreak\hskip0pt} 
\newcommand{\SB}{\{\,}%
\newcommand{\SM}{\;{|}\;}%
\newcommand{\SE}{\,\}}%
\let\phi=\varphi
\let\epsilon=\varepsilon
\newcommand{\FPT}{\text{\normalfont FPT}}
\newcommand{\W}[1][xxxx]{\text{\normalfont W[#1]}}
\newcommand{\cwd}{\text{\normalfont cwd}}
\newcommand{\A}{\mathcal{A}}
\newcommand{\R}{\mathcal{R}}
\renewcommand{\L}{\mathcal{L}}
\newcommand{\mods}{\mathit{Mods}}
\newcommand{\CCC}{\mathcal{C}}
\title{Clique-Width and Directed Width Measures for 
Answer-Set Programming}
\author{Bernhard Bliem \and Sebastian Ordyniak \and Stefan
  Woltran\institute{TU Wien, Vienna, Austria. 
} }
\begin{document}

\lv{
\title{Clique-Width and Directed Width Measures for 
Answer-Set Programming}

\author{Bernhard Bliem (\small bliem@dbai.tuwien.ac.at) \and Sebastian
  Ordyniak (\small sordyniak@gmail.com) \and Stefan
  Woltran (\small woltran@dbai.tuwien.ac.at)
  \\[0.1cm]
  \mbox{}\small TU Wien, Vienna, Austria 
}

\date{}
}
\maketitle

\begin{abstract}
Disjunctive Answer Set Programming (ASP) is a powerful
declarative programming paradigm 
whose main decision problems are located on the second level of the polynomial hierarchy.
Identifying tractable fragments and developing efficient algorithms for
such fragments are thus important objectives in order to complement the 
sophisticated ASP systems available to date.
Hard problems can become tractable if some problem parameter 
is bounded by a fixed constant; such problems are then called fixed-parameter tractable (FPT).
While several FPT results for ASP exist, parameters that relate to
directed or signed graphs representing the program at hand have been neglected so far.
In this paper, we first give some negative observations showing that directed width measures on the dependency 
graph of a program do not lead to FPT results. 
We then consider the graph parameter of signed clique-width and present a novel
dynamic programming algorithm that is FPT w.r.t.\ this parameter.
Clique-width is more general
than the well-known treewidth, and, to the best of our knowledge, ours is 
the first FPT algorithm for bounded clique-width for reasoning problems beyond SAT.
\end{abstract}

\section{Introduction}


Disjunctive Answer Set Programming (ASP)~\cite{brew-etal-11-asp,gelf-lifs-91,mare-trus-99} is an
active field of AI providing a declarative formalism for solving hard
computational problems. Thanks to the high sophistication of modern solvers~\cite{2012Gebser}, 
ASP was successfully used in several
applications, including product configuration~\cite{SoininenN99},
decision support for space shuttle flight
controllers~\cite{
BalducciniGN06}, team
scheduling~\cite{RiccaGAMLIL12}, and
bio-informatics~\cite{GuziolowskiVETCSS13}.

Since the main decision problems of propositional ASP are located at the second level of the
polynomial hierarchy \cite{EiterGottlob95b,DBLP:journals/tplp/Truszczynski11}, the quest for easier 
fragments are important research
contributions that could lead to improvements in ASP systems.
An interesting approach to dealing with intractable problems
comes from parameterized complexity theory~\cite{DowneyFellows13}
and is based on the 
fact that many hard problems
become polynomial-time tractable if some problem parameter is bounded by a fixed constant.
If the order of the polynomial bound on the runtime is 
independent of the parameter,
one speaks of
{\em fixed-parameter tractability\/} (FPT).
Results in this direction for the ASP domain include
\cite{LoncTruszczynski03} (parameter: size of answer sets),
\cite{DBLP:conf/aaai/LinZ04} (number of cycles),
\cite{DBLP:journals/amai/Ben-EliyahuD94} (length of longest cycles),
\cite{DBLP:journals/jair/Ben-Eliyahu96} (number of non-Horn rules),
and
\cite{FichteSzeider15} (backdoors).
Also related 
is the parameterized complexity analysis of 
reasoning under 
subset-minimal models, see, e.g., \cite{DBLP:conf/kr/LacknerP12}.

As
many prominent representations of logic programs are
given
in terms of directed graphs (consider, e.g., the dependency graph), 
it is natural to investigate parameters for
ASP that apply to directed graphs. Over the past two
decades, various width measures for directed graphs have been
introduced~\cite{JohnsonRobertsonSeymour01,Barat04,
BerwangerDawarHunterKreutzerObdrzalek12, 
HunterKreutzer08,Safari05}. 
These are typically
smaller than, e.g., the popular parameter of treewidth
\cite{Bodlaender93b}. 
%
In particular, all these measures are zero on directed
acyclic graphs (DAGs), but the treewidth of DAGs can be arbitrarily high.
Moreover, since these measures are based on some notion of
``closeness'' to acyclicity and the complexity of ASP is closely
related to the ``cyclicity'' of the rules in a program, 
such measures seem promising for obtaining efficient 
algorithms for ASP. Prominent applications of directed width measures 
include the $k$-Disjoint Path Problem~\cite{JohnsonRobertsonSeymour01},
query evaluation in graph databases~\cite{BaganBonifatiGroz13},
and 
model checking~\cite{BojanczykDittmannKreutzer14}.

Another graph parameter 
for capturing
the structural complexity of a 
graph is clique-width \cite{CourcelleEngelfrietRozenberg90,CourcelleEngelfrietRozenberg93,CourcelleOlariu00}.
It applies to 
directed and undirected graphs, and in its general form (known as signed clique-width)
to edge-labeled graphs.
It is
defined via graph construction 
where only a limited number of vertex
labels is available; vertices that share the same label at a certain point of
the construction process must be treated uniformly in subsequent steps.
Constructions
can be given by expressions in a graph grammar (so-called cwd-expressions) and the minimal number of labels required for 
constructing
a graph $G$ is the clique-width of $G$.
%
While clique-width is in a certain way orthogonal to other 
directed width measures,
it 
is more general than treewidth; there are classes of graphs 
with constant clique-width but
arbitrarily high treewidth (e.g., complete graphs). In contrast, 
graphs with
bounded treewidth also have 
bounded clique-width~\cite{CorneilRotics01,CourcelleOlariu00}. 
%

%

By means of a meta-theorem due to Courcelle, Makowsky, and
Rotics~\cite{CourcelleMakowskyRotics00}, one can solve any graph problem that
can be expressed in Monadic Second-Order Logic with 
quantification on vertex sets (MSO${}_1$) in linear time for graphs of
bounded clique-width.
This result is similar to Courcelle's theorem~\cite{Courcelle87,Courcelle90} for graphs of bounded treewidth, which has been used for
the FPT result for ASP w.r.t.\ treewidth 
\cite{GottlobPichlerWei10}.
There, 
the incidence graph of a program 
is used as an underlying graph structure
(i.e., the graph containing a vertex for each atom $a$ and rule $r$ of the program, with
an edge between $a$ and $r$ whenever $a$ appears in $r$).
Since the formula given in 
\cite{GottlobPichlerWei10} is in MSO${}_1$,
the FPT result for ASP 
applies also to signed clique-width.

Clique-width is NP-hard to compute~\cite{FellowsRosamondRoticsSzeider09},
which might be considered
as an obstacle toward practical applications. 
However, one can check in polynomial time whether the width of
a graph is bounded by a fixed~$k$
\cite{OumSeymour06,Kante07}.
(These
algorithms involve an additive approximation error that is bounded in terms
of~$k$).
Recently, SAT solvers have been used to obtain
sequences of vertex partitions that correspond to
cwd-expressions~\cite{DBLP:journals/tocl/HeuleS15} for a given graph.
For some applications, it might not even be necessary to compute clique-width and 
the underlying cwd-expression:
As mentioned in \cite[Section
1.4]{FischerMakowskyRavve06}, applications 
from
the area of verification are supposed to already come with such an expression.
Moreover, it might even be possible
to partially obtain cwd-expressions during the grounding process of
ASP.

This all calls for 
dedicated algorithms for solving ASP 
for programs of bounded clique-width. 
In contrast to treewidth where the FPT result 
from \cite{GottlobPichlerWei10}
has been used for designing \cite{JaklPichlerWoltran09} and implementing \cite{jelia:MorakPRW10} a 
dynamic programming algorithm,
to the best of our knowledge
there 
are no algorithms yet that explicitly exploit
the fixed-parameter tractability of ASP on bounded clique-width.
In fact, we are not aware 
of any FPT algorithm for bounded clique-width for a reasoning problem
located on the second level of the polynomial hierarchy
(except \cite{DBLP:conf/comma/DvorakSW10} from
the area of abstract argumentation).


The main contributions of this paper are as follows.
First, we
show 
some
negative results for 
several \emph{directed width measures},
indicating that the structure of the dependency graph and of various natural directed versions of the signed incidence graph does not 
adequately measure
the complexity of evaluating the 
corresponding program. 

Second,
concerning \emph{signed clique-width},
we give a novel dynamic programming algorithm that runs 
in polynomial time for programs
where this parameter is bounded on their incidence graphs.
We do so 
by suitably generalizing
the seminal approach of~\cite{FischerMakowskyRavve06} for the SAT problem.
We also give a preliminary analysis how many signs are required in order
to obtain FPT.


 



\section{Preliminaries}\label{sec:prel}

\paragraph{Graphs.}
We use standard graph terminology, see for
instance the handbook~\cite{Diestel12}. All our graphs are simple.
An undirected graph $G$ is a tuple $(V,E)$, where $V$ or
$V(G)$ is the vertex set and $E$ or $E(G)$ is the edge set. 
For a subset $V' \subseteq V(G)$, we denote by $G[V']$,
the \emph{induced subgraph} of $G$ induced by the vertices in $V'$,
i.e., $G[V']$ has vertices $V'$ and edges $\SB \{u,v\} \in E(G) \SM
u,v \in V'\SE$. We also denote by $G \setminus V'$ the graph $G[V(G)
\setminus V']$.
Similarly to undirected graphs, a digraph $D$ is a tuple
$(V,A)$, where $V$ or $V(D)$ is the vertex set and $A$ or $A(D)$ is
the \emph{arc set}. A \emph{strongly connected component} of a digraph
$D$ is a maximal subgraph $Z$ of $D$ that is strongly connected, i.e.,
$Z$ contains a directed path between each pair of vertices
in $Z$. We denote by $\undG(D)$ the \emph{symmetric closure} of $D$,
i.e., the graph with vertex set $V(D)$ and arc set $\SB (u,v),(v,u)
\SM (u,v) \in A(D) \SE$. Finally, for a directed graph $D$, we
denote by $\biG(G)$, the undirected graph with vertex set $V(G)$ and
edge set $\SB \{u,v\} \SM (u,v) \in A(D) \SE$.


\paragraph{Parameterized Complexity.}
In parameterized
algorithmics~\cite{DowneyFellows13}
the runtime of an algorithm is studied with respect to a parameter
$k\in\N$ and input size~$n$.
The most favorable class is \FPT\ (\textit{fixed-parameter tractable})
which contains all problems that can be decided by an algorithm
running in time $f(k)\cdot n^{\bigO{1}}$, where $f$ is a computable
function.
We also call such an algorithm fixed-parameter tractable, or FPT for short.
Formally, a {\em parameterized problem\/} is a subset of $\Sigma^*\times\N$, 
where $\Sigma$ is the input alphabet.
Let $L_1\subseteq \Sigma_1^*\times\N$ and $L_2\subseteq
\Sigma_2^*\times\N$ be two parameterized problems.
A \textit{parameterized reduction} (or FPT-reduction) from $L_1$ to $L_2$ is a mapping $P:\Sigma_1^*\times\N\rightarrow\Sigma_2^*\times\N$ such that:
(1)   $(x,k)\in L_1$ iff $P(x,k)\in L_2$,
(2) the mapping can be computed by an FPT-algorithm w.r.t.\ parameter $k$, and
(3) there is a computable function $g$ such that $k'\leq g(k)$, where $(x',k')=P(x,k)$.
%
The class $\W[1]$ captures parameterized intractability and contains
all problems that are FPT-reducible to \textsc{Partitioned Clique} when
parameterized by the size of the solution. 
Showing $\W[1]$\hy hardness for a problem rules out the existence of an FPT-algorithm under the usual 
assumption $\FPT\neq\W[1]$.

\paragraph{Answer Set Programming.}
 A \emph{program} $\Pi$ consists of a set $\A(\Pi)$ of propositional atoms and a set $\R(\Pi)$ of rules of the form
\begin{equation*}
  a_1\vee \cdots \vee a_l \leftarrow a_{l+1}, \ldots, a_m, \neg a_{m+1}, \ldots,
  \neg a_n,
\end{equation*}
where $n \geq m \geq l$ and $a_i \in \A(\Pi)$ for $1 \leq i \leq n$.
Each rule
$r \in \R(\Pi)$ 
consists of a head $h(r) = \{
a_1,\ldots,a_l \}$ and a body 
given by $p(r) = \{a_{l+1},\ldots,a_m \}$ and $n(r) = \{a_{m+1},\ldots,a_n \}$.
A set $M \subseteq \A(\Pi)$ is a called a model of $r$ if $p(r) \subseteq M$
and $n(r) \cap M = \emptyset$ imply $h(r) \cap M \neq
\emptyset$.  We denote the set of models of $r$ by $\mods(r)$ and the models of
$\Pi$ are given by $\mods(\Pi) = \bigcap_{r \in \R(\Pi)}
\mods(r)$.
 
The reduct $\Pi^I$ of a program $\Pi$ with respect to a set of atoms $I
\subseteq \A(\Pi)$ is the program $\Pi^I$ with
$\A(\Pi^I) = \A(\Pi)$
and $\R(\Pi^I) = \left\{r^+ \mid r \in
\R(\Pi),\; n(r) \cap I = \emptyset)\right\}$, where 
$r^+$ 
denotes rule $r$ without negative body, i.e., $h(r^+) =
h(r)$, $p(r^+) = p(r)$, and $n(r^+) = \emptyset$.
Following~\cite{gelf-lifs-91},
$M \subseteq \A(\Pi)$ is an \emph{answer set} of 
$\Pi$ if $M \in \mods(\Pi)$ and for no $N \subsetneq M$,
we have $N \in \mods(\Pi^M)$. 
%
%
%
In what follows,
we consider the 
problem of ASP consistency, i.e.,
the
problem of deciding whether a given program 
has at least one answer set.
As shown by Eiter and Gottlob, this problem is
$\Sigma^P_2$-complete~\cite{EiterGottlob95b}. 


\paragraph{Graphical Representations of ASP.} 
Let $\Pi$ be a program. 
The \emph{dependency graph} of $\Pi$, denoted by $\depG(\Pi)$, is
the directed graph with vertex set $\A(\Pi)$ and that contains an arc
$(x,y)$
if there is a rule $r\in
\R(\Pi)$ such that either $x \in h(r)$ and $y \in p(r) \cup n(r)$ or 
$x,y \in h(r)$~\cite{FichteSzeider15}. 
Note that there are other notions of dependency graphs used 
in the literature, most of them,
however, are given as subgraphs of $\depG(\Pi)$.
As we will see later, our definition of dependency graphs 
allows us to draw immediate conclusions for such other notions.


The \emph{incidence graph} of $\Pi$, denoted by $\incG(\Pi)$, is the
undirected graph with vertices $\A(\Pi) \cup \R(\Pi)$ that
contains an edge between a \emph{rule vertex} $r \in \R(\Pi)$ and a
\emph{atom vertex} $a \in \A(\Pi)$ whenever $a \in h(r) \cup p(r)\cup n(r)$. 
The \emph{signed incidence graph} of $\Pi$, denoted by $\sincG(\Pi)$,
is the graph $\incG(\Pi)$, where addionally every edge of
$\incG(\Pi)$ between an atom $a$ and a rule $r$ is annotated with a
label from $\{h,p,n\}$ depending on whether $a$ occurs in $h(r)$,
$p(r)$, or $n(r)$.



\section{Directed Width Measures}\label{sec:dwm}


\begin{figure}[t]
  \centering
    \begin{tikzpicture}[node distance=\sv{3mm}\lv{6mm}, >=latex]
      \tikzstyle{every edge}=[draw,line width=1pt]

      \tikzstyle{mn}=[draw,ellipse,inner sep=0.5mm, line width=1pt]

      \draw

      node[mn] (ucr) {undirected cycle-rank}
      node[below=\sv{1mm}\lv{2mm} of ucr] (und) {}
      node[mn, left=\sv{1cm}\lv{2cm} of und] (td) {treedepth}
      node[mn, below=of td] (upw) {pathwidth}
      node[mn, below=of upw] (utw) {treewidth}
      
      node[mn, right=\sv{1.0cm}\lv{2cm} of und] (cr) {cycle-rank}
      node[mn, below=of cr] (pw) {directed pathwidth}
      node[below=\sv{1mm}\lv{2mm} of pw] (dkd) {}
      node[mn, left=\sv{1cm}\lv{2cm} of dkd] (daw) {DAG-width}
      node[mn, right=\sv{1cm}\lv{2cm} of dkd] (kw) {Kelly-width}
      node[mn, below=\sv{1mm}\lv{2mm} of dkd] (dtw) {directed treewidth}
      node[mn, below=of dtw] (dw) {D-width}

      (ucr) edge[->] (cr)
      (ucr) edge[->] (td)
      (td) edge[->] (upw)
      (upw) edge[->] (utw)
      (cr) edge[->] (pw)
      (pw) edge[->] (daw)
      (pw) edge[->] (kw)
      (daw) edge[->] (dtw)
      (kw) edge[->] (dtw)
      (dtw) edge[->] (dw)
      ;
    \end{tikzpicture}
\sv{\vspace{-8mm}}
  \caption{Propagation of hardness results for the considered width measures. An arc $(A,B)$ indicates that any hardness result parameterized by measure $A$ implies a corresponding hardness result parameterized by $B$.}
  \label{fig:prophard}
\end{figure}
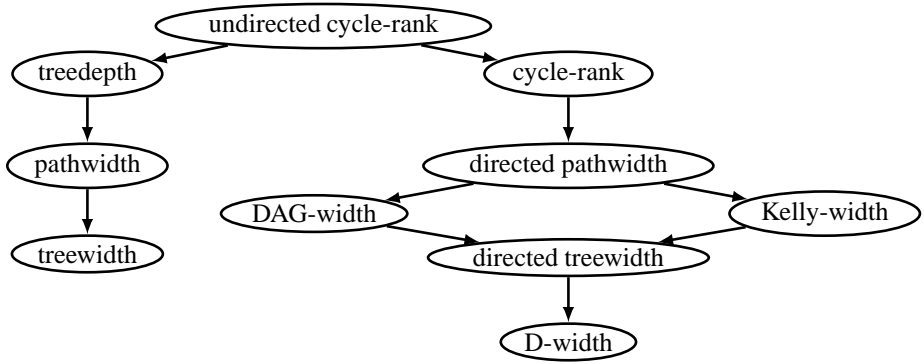

Since many 
representations of ASP programs are
in terms of directed graphs, it is natural to consider parameters for
ASP that are tailor-made for directed graphs. Over the past two
decades various width measures for directed graphs have been
introduced, which are better suited for directed graphs than treewidth, on which they are based.
The most prominent of those are directed
treewidth~\cite{JohnsonRobertsonSeymour01}, directed
pathwidth~\cite{Barat04},
DAG-width~\cite{BerwangerDawarHunterKreutzerObdrzalek12}, 
Kelly-width~\cite{HunterKreutzer08}, and D-width~\cite{Safari05} 
(see also~\cite{KreutzerOrdyniak14}).
Since these width measures are usually smaller on
directed graphs than treewidth, it is worth considering them for
problems that have already been shown to be fixed-parameter tractable
parameterized by treewidth. In particular, all of these measures are zero on directed
acyclic graphs (DAGs), but the treewidth of DAGs can be arbitrary high.
\lv{Moreover, since these measures are based on some notion of
``closeness'' to acyclicity and the complexity of ASP is closely
related to the ``cyclicity'' of the logical rules, one would consider such measures as
promising for obtaining efficient algorithms for ASP.}

In this section, we give results for directed width measures
when 
applied to dependency graphs as defined in Section~\ref{sec:prel}.
To state our results in the most general manner, we will employ the
parameter cycle-rank~\cite{Cohen68}.
Since 
the
cycle-rank is always greater or equal to any of the above mentioned
directed width measures \cite{Gruber12,KaiserKreutzerRabinovichSiebertz14}, 
any (parameterized) hardness result
obtained for cycle-rank carries over to the aforementioned width
measures for directed graphs.



\begin{definition}\label{def:cyclerank}
  Let $D=(V,A)$ be a directed graph. The \emph{cycle-rank} of $D$, denoted by
  $\cyclerank(D)$, is inductively defined as follows: if $D$ is
  acyclic, then $\cyclerank(D)=0$. Moreover, if $D$ is strongly
  connected, then $\cyclerank(D)=1+\min_{v \in V}\cyclerank(D\setminus
  \{v\})$. Otherwise the cycle-rank of $D$ is the maximum cycle-rank of
  any strongly connected component of $D$.
\end{definition}
We will also consider a natural ``undirected version'' of the
cycle-rank for directed graphs, i.e.,
we define the \emph{undirected cycle-rank} of a directed graph $D$, denoted by $\uncyclerank(D)$, to be
the cycle-rank of $\undG(D)$. It is also well known (see,
e.g., \cite{GiannopoulouHunterThilikos12}) that the cycle-rank of
$\undG(D)$ is equal to the treedepth of $\biG(D)$, i.e., the
underlying undirected graph of $D$, and that the treedepth is always an upper bound for the
pathwidth and the treewidth of an undirected
graph~\cite{BodlaenderGilbertHafsteinssonKloks95}. 
Putting these
facts together implies that any hardness result obtained for 
the undirected cycle-rank implies hardness for pathwidth,
treewidth, treedepth as well as the aforementioned directed width measures. See also Figure~\ref{fig:prophard} for an
illustration how hardness results for the considered width measures propagate.

Finally, we would like to remark that both the cycle-rank and the
undirected cycle-rank are easily seen to be closed under taking subgraphs, i.e., the
(undirected) cycle-rank of a graph is always larger or equal to the
(undirected) cycle-rank of every subgraph of the graph.

\subsection*{Hardness Results}\label{sec:hardness}

We show  that
ASP consistency remains as hard as in the general setting even for 
instances that have a dependency graph of constant width in terms of any of the directed width
measures introduced.

For our hardness results, we employ the reduction
given 
in~\cite{EiterGottlob95b} 
showing that ASP consistency is
$\SigmaP{2}$\hy hard in 
general.
The reduction is given from the validity problem for quantified
Boolean formulas (QBF) of the form:
  $\Phi := \exists x_1 \cdots \exists x_n \forall y_1 \cdots \forall y_m \bigvee_{j=1}^rD_j$
where each $D_j$ is a conjunction of at most three literals over the
variables $x_1,\dotsc,x_n$ and $y_1,\dotsc,y_m$. We will denote the
set of all QBF formulas of the above form in the following by $\QBFF$.

Given $\Phi\in\QBFF$, a program $\Pi(\Phi)$ is constructed
as follows. The atoms of $\Pi(\Phi)$ are $x_1,v_1,\dotsc,x_n,v_n$,
$y_1,z_1,\dotsc,y_m,z_m$, and $w$ and $\Pi(\Phi)$ contains the
following rules:
\begin{itemize}
\item for every $i$ with $1\leq i \leq n$, the rule $x_i\vee v_i \leftarrow$,
\item for every $i$ with $1\leq i \leq m$, the rules $y_i\vee z_i
  \leftarrow$, $y_i \leftarrow w$, $z_i \leftarrow w$, and $w
  \leftarrow y_i,z_i$,
\item for every $j$ with $1\leq j \leq r$, the rule 
  $w \leftarrow \sigma(L_{j,1}),\sigma(L_{j,2}),\sigma(L_{j,3})$,
  where $L_{j,l}$ (for $l \in \{1,2,3\}$) is the $l$-th literal that
  occurs in $D_j$ (if $\lvert D_j \rvert < 3$, the respective parts are omitted) and the function $\sigma$ is defined by setting
$\sigma(L)$
to $v_i$ if $L = \neg x_i$,
to $z_i$ if $L = \neg y_i$, and
to $L$ otherwise.
\item the rule $\leftarrow \neg w$ (i.e., with an empty disjunction in the head).
\end{itemize}
It has been shown~\cite[Theorem 38]{EiterGottlob95b} that  a $\QBFF$
formula $\Phi$ is valid 
iff
$\Pi(\Phi)$ has an answer set. 
As checking validity of $\QBFF$ formulas 
is $\SigmaP{2}$\hy complete~\cite{StockmeyerMeyer73}, this reduction shows that
ASP is $\SigmaP{2}$\hy hard.

\begin{figure}[t]
  \centering
  \begin{tikzpicture}[node distance=5mm and 1cm, >=latex]
    \tikzstyle{every edge}=[draw,line width=1pt,latex'-latex']

    \tikzstyle{mn}=[draw,circle,inner sep=1.5mm, line width=1pt]

    \draw
    node[mn, label=above:$x_1$] (x1) {}
    node[mn, right=of x1, label=above:$v_1$] (v1) {}

    node[mn, below=of x1, label=above:$x_2$] (x2) {}
    node[mn, right=of x2, label=above:$v_2$] (v2) {}

    (x1) edge (v1)
    (x2) edge (v2)

    node[mn, node distance=3cm, right=of v1, label=above:$y_1$] (y1) {}
    node[mn, right=of y1, label=above:$z_1$] (z1) {}

    node[mn, below=of y1, label=above:$y_2$] (y2) {}
    node[mn, right=of y2, label=above:$z_2$] (z2) {}

    (y1) edge (z1)
    (y2) edge (z2)

    node[below=of v2] (wd) {}

    node[mn, node distance=1.5cm, right=of wd, label=below:$w$] (w) {}

    (y1) edge[] (w)
    (y2) edge[] (w)
    (z1) edge[out=-45, in=-45] (w)
    (z2) edge[] (w)

    (w) edge[out=225, in=225] (x1)
    (w) edge[] (v2)
    ;
  \end{tikzpicture}
\sv{\vspace{-1.5cm}}
  \caption{The symmetric closure of the dependency graph of the
    program $\Pi(\Phi)$ for the formula 
    $\Phi:=\exists x_1 \exists x_2 \forall y_1 \forall y_2 (x_1\wedge
    \neg y_2) \vee (\neg x_2 \wedge y_2)$. Here $\Pi(\Phi)$ contains the rules $x_i \vee v_i
    \leftarrow$, $y_i \vee z_i \leftarrow$, $y_i\leftarrow w$, $z_i
    \leftarrow w$, for every $i \in \{1,2\}$ and the rules $w \leftarrow x_1, z_2$, $w
    \leftarrow v_2,y_2$.}
  \label{fig:depGQBF}
\end{figure}
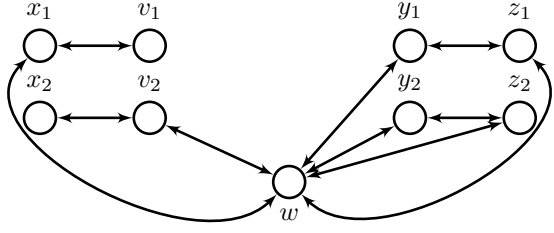

\begin{lemma}
  Let $\Phi$ be a $\QBFF$, then
  $\uncyclerank(\depG(\Pi(\Phi))) \leq 2$.
\end{lemma}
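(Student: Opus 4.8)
The plan is to argue directly from Definition~\ref{def:cyclerank}, exploiting the fact that in $\depG(\Pi(\Phi))$ the atom $w$ is essentially a cut vertex whose removal destroys all cyclic structure. Write $D := \depG(\Pi(\Phi))$ and let $G := \biG(D)$ be its underlying undirected graph, so that $\undG(D)$ is obtained from $G$ by replacing every edge with two opposite arcs, and $\uncyclerank(D) = \cyclerank(\undG(D))$ by definition of $\uncyclerank$.

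The first step is a careful inspection of the arc set of $D$. Going through the four groups of rules of $\Pi(\Phi)$ one checks that every arc of $D$ is either between the two head atoms of a disjunctive fact — i.e.\ of the form $(x_i,v_i)$, $(v_i,x_i)$, $(y_i,z_i)$ or $(z_i,y_i)$ — or is incident to $w$. Indeed, the rules $y_i\leftarrow w$, $z_i\leftarrow w$, $w\leftarrow y_i,z_i$ and $w\leftarrow\sigma(L_{j,1}),\sigma(L_{j,2}),\sigma(L_{j,3})$ all have $w$ in the head or in the body, so each arc they contribute touches $w$; the disjunctive facts $x_i\vee v_i\leftarrow$ and $y_i\vee z_i\leftarrow$ contribute only the pair arcs listed above; and the constraint $\leftarrow\neg w$ has empty head and therefore contributes no arc at all. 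Consequently, every edge of $G$ is incident to $w$ or is one of the ``pair edges'' $\{x_i,v_i\}$, $\{y_i,z_i\}$.

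Next I would record two elementary facts. First, for any digraph $H$ and any vertex $v$ one has $\cyclerank(H)\le 1+\cyclerank(H\setminus\{v\})$: if $H$ is acyclic this is trivial; if $H$ is strongly connected it is immediate from the recursive clause of Definition~\ref{def:cyclerank}; and otherwise one applies the strongly-connected case to each strongly connected component $Z$ of $H$, using that $Z\setminus\{v\}$ (or $Z$ itself, if $v\notin Z$) is a subgraph of $H\setminus\{v\}$ together with the noted monotonicity of cycle-rank under taking subgraphs. Second, if every connected component of an undirected graph has at most one edge, then its bidirection has cycle-rank at most $1$, since each component's bidirection is either a single vertex (acyclic) or a single $2$-cycle, of cycle-rank $1+\cyclerank(K_1)=1$. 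Now by the previous paragraph $G\setminus\{w\}$ retains only the pair edges $\{x_i,v_i\}$ and $\{y_i,z_i\}$, which form a matching; hence $\undG(D)\setminus\{w\}$, the bidirection of $G\setminus\{w\}$, has cycle-rank at most $1$. Combining the two facts, $\cyclerank(\undG(D))\le 1+\cyclerank(\undG(D)\setminus\{w\})\le 1+1=2$, which is exactly the asserted bound.

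The only step requiring genuine care is the arc-set analysis: one must verify exhaustively that $\depG(\Pi(\Phi))$ contains no arc between two non-$w$ atoms other than the $x_i$–$v_i$ and $y_i$–$z_i$ pairs. The key observation there is that $\sigma(L_{j,l})$ is always one of $x_i,v_i,y_i,z_i$ and occurs in the \emph{body} of a rule whose head is $\{w\}$, so it can only yield an arc incident to $w$. Everything after that is a direct application of Definition~\ref{def:cyclerank} and the subgraph-monotonicity of cycle-rank.
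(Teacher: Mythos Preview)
Your proof is correct and follows essentially the same approach as the paper: analyze the arc set to see that every arc of $\undG(\depG(\Pi(\Phi)))$ not incident to $w$ lies within one of the two-element pairs $\{x_i,v_i\}$ or $\{y_i,z_i\}$, remove $w$, and observe that the remaining strongly connected components have at most two vertices and hence cycle-rank at most~$1$. Your version is in fact more careful than the paper's, since you explicitly justify the inequality $\cyclerank(H)\le 1+\cyclerank(H\setminus\{v\})$ in the non-strongly-connected case, whereas the paper uses it implicitly.
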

\begin{proof}
  Figure~\ref{fig:depGQBF} illustrates the symmetric closure of 
  $\depG(\Pi(\Phi))$
  for a simple $\QBFF$ formula $\Phi$. As this example
  illustrates, the only arcs in $\undG(\depG(\Pi(\Phi)))$ not incident to $w$ are the
  arcs incident to $x_i$ and $v_i$ and the arcs incident to $y_j$ and $z_j$, for
  $1 \leq i \leq n$ and $1\leq j \leq m$. Hence, after
  removing $w$ from $\undG(\depG(\Pi(\Phi)))$, every strongly connected
  component of the remaining graph
  contains at most two vertices and each of those has hence cycle-rank
  at most one. It follows that the cycle-rank of
  $\undG(\depG(\Pi(\Phi)))$ and hence the undirected cycle-rank of
  $\depG(\Pi(\Phi))$ is at
  most two.
\end{proof}
Together with our considerations from 
above, we obtain:
\begin{THE}
  ASP consistency is $\SigmaP{2}$\hy complete even for instances whose dependency
  graph has width at most two for any of the following width measures:
  undirected cycle-rank, pathwidth, treewidth, treedepth, cycle-rank, directed treewidth, directed
  pathwidth, DAG-width, Kelly-width, and D-width.
\end{THE}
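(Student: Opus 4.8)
The plan is to combine the three ingredients that have just been assembled. First, by \cite[Theorem 38]{EiterGottlob95b}, the map $\Phi \mapsto \Pi(\Phi)$ is a correct reduction: a formula $\Phi \in \QBFF$ is valid iff $\Pi(\Phi)$ has an answer set, and this reduction is clearly polynomial-time computable. Second, by \cite{StockmeyerMeyer73}, deciding validity of $\QBFF$ formulas is $\SigmaP{2}$-complete, so ASP consistency restricted to programs of the form $\Pi(\Phi)$ is already $\SigmaP{2}$-hard; membership in $\SigmaP{2}$ holds for ASP consistency in general by \cite{EiterGottlob95b}, hence also for this restricted class, giving $\SigmaP{2}$-completeness. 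Third, by the preceding Lemma, every program in the image of the reduction satisfies $\uncyclerank(\depG(\Pi(\Phi))) \le 2$.

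It remains to observe that the bound of $2$ on the undirected cycle-rank propagates to the bound of $2$ on each of the listed width measures. This is exactly the chain of inequalities recorded in the paragraphs before the theorem (and summarised in Figure~\ref{fig:prophard}). Concretely: $\uncyclerank(\depG(\Pi(\Phi)))$ equals the treedepth of $\biG(\depG(\Pi(\Phi)))$ by \cite{GiannopoulouHunterThilikos12}; treedepth is an upper bound for pathwidth and hence for treewidth by \cite{BodlaenderGilbertHafsteinssonKloks95}; and the cycle-rank $\cyclerank(\depG(\Pi(\Phi)))$ is at most the undirected cycle-rank, while cycle-rank in turn upper-bounds directed treewidth, directed pathwidth, DAG-width, Kelly-width, and D-width by \cite{Gruber12,KaiserKreutzerRabinovichSiebertz14}. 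So for each width measure $w$ in the list we get $w(\depG(\Pi(\Phi))) \le 2$.

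Putting these together: the restriction of ASP consistency to instances whose dependency graph has $w$-width at most $2$ still contains (via the reduction) all instances of a $\SigmaP{2}$-complete problem, and is itself in $\SigmaP{2}$; therefore it is $\SigmaP{2}$-complete, for every $w$ in the list. Since the statement is really just a bookkeeping combination of facts already established in the excerpt, there is no genuinely hard step here — the only point requiring any care is making sure each of the ten named parameters is correctly dominated by either treedepth or cycle-rank (equivalently, that Figure~\ref{fig:prophard} is read off correctly), and noting that all these measures are monotone enough that no parameter blow-up occurs. One subtlety worth stating explicitly is that the theorem speaks of the dependency graph $\depG(\Pi)$ as we have defined it, which is a supergraph of the other dependency-graph notions in the literature; since cycle-rank (and its undirected version) is closed under taking subgraphs, the same width bound of $2$ holds a fortiori for those alternative dependency graphs as well, so the hardness result is not an artefact of our particular choice of definition.
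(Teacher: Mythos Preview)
Your proposal is correct and follows essentially the same approach as the paper: the theorem in the paper is stated immediately after the lemma bounding $\uncyclerank(\depG(\Pi(\Phi)))$ with the justification ``Together with our considerations from above, we obtain,'' and your write-up simply spells out those considerations (the Eiter--Gottlob reduction, $\SigmaP{2}$-hardness of $\QBFF$, the lemma, and the propagation of bounds from Figure~\ref{fig:prophard}). Your closing remark about subgraph-monotonicity of (undirected) cycle-rank and alternative dependency-graph notions also matches the paper's observation immediately following the theorem.
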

Observe that because the undirected cycle-rank is closed under taking
subgraphs and we chose the ``richest'' variant of the dependency graph, the above result carries over
to the other notions of dependency graphs of ASP programs considered in the literature.

The above result draws a very negative picture of the complexity of
ASP w.r.t. restrictions on the dependency graph. 
In particular, not
even structural restrictions of the dependency graph by 
the usually very successful parameter treewidth can be employed for
ASP. This is in contrast to our second graphical representation of ASP,
the incidence graph, for which it is known that ASP is fixed-parameter
tractable parameterized by the treewidth~\cite{JaklPichlerWoltran09}.
It is hence natural to ask whether the
same still holds under restrictions provided by one of the directed
width measures 
under consideration.
We
first need to discuss how to obtain a directed version of the usually
undirected incidence graph. For this, observe that the
incidence graph, unlike the signed incidence graph, 
provides merely an incomplete model
of the underlying ASP instance. Namely, it misses the information
about \emph{how} atoms occur in rules, i.e., whether they occur in the head, in the
positive body, or in the negative body of a rule.
A directed version of the incidence graph should therefore use the additional
expressiveness provided by the direction of the arcs to incorporate
the
information given by the labels of the signed
incidence graph.
For instance, a natural directed version of the incidence graph
could orient the edges 
depending on
whether an atom occurs in the head or in the body of a rule.
Clearly, there are many 
ways to orient the
edges and it is not a priori clear which of those orientations leads to a directed
version of the incidence graph that is best suited for an application of the
directed width measures. Every 
orientation should, however, be
consistent with the labels of the signed incidence graph, i.e.,
whenever two atoms are connected to a rule via edges having the same
label, 
their arcs should be oriented
in the same way. We call such an orientation of the incidence graph a
\emph{homogeneous} orientation.


\begin{lemma}
  Let $\Phi$ be a $\QBFF$, then
  the cycle-rank of any homogeneous orientation of the incidence graph
  of $\Pi(\Phi)$ is at most one.
\end{lemma}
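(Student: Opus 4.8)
The plan is to establish the slightly stronger statement that for \emph{every} homogeneous orientation $D$ of the incidence graph of $\Pi(\Phi)$, deleting the distinguished atom vertex $w$ yields an acyclic digraph $D\setminus\{w\}$. Once this is shown, Definition~\ref{def:cyclerank} finishes the argument: every strongly connected component $Z$ of $D$ is either a single vertex (acyclic, of cycle-rank $0$) or contains a directed cycle and hence --- by the stronger statement --- contains $w$, in which case $Z\setminus\{w\}$ is acyclic and $\cyclerank(Z)=1+\min_{v\in Z}\cyclerank(Z\setminus v)\le 1$. Thus $\cyclerank(D)\le 1$.

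The core of the proof is a local property of homogeneous orientations. The incidence graph is bipartite between atom vertices and rule vertices, so a directed cycle alternates between the two sides; at every rule vertex $r$ it visits, it therefore uses exactly one incident edge oriented into $r$ and exactly one oriented out of $r$. By the definition of a homogeneous orientation, all edges incident to $r$ that carry the same label from $\{h,p,n\}$ are oriented the same way relative to $r$, so these two cycle-edges at $r$ must carry \emph{different} labels. Hence a rule vertex can lie on a directed cycle only if it is incident to edges of at least two distinct labels, and the cycle must enter and leave it via edges of two distinct labels.

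It then suffices to inspect the rules of $\Pi(\Phi)$. The disjunctive facts $x_i\vee v_i\leftarrow$ and $y_i\vee z_i\leftarrow$ have only $h$-labelled incident edges, and the constraint $\leftarrow\neg w$ has a single incident edge (to $w$); none of these can lie on a directed cycle. Each of the rules $y_i\leftarrow w$ and $z_i\leftarrow w$ has exactly two incident edges, one of them to $w$, so any cycle through it uses both and hence visits $w$. Finally, for each rule $w\leftarrow y_i,z_i$ and each rule $w\leftarrow\sigma(L_{j,1}),\sigma(L_{j,2}),\sigma(L_{j,3})$, the head atom $w$ is incident via the unique $h$-labelled edge while all remaining incident edges are $p$-labelled; since a cycle through such a rule must use edges of two distinct labels, it uses the $h$-edge and again visits $w$. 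As every directed cycle visits at least one rule vertex, every directed cycle visits $w$, which proves the stronger statement.

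I do not expect a substantial obstacle; the step needing the most care is the rule-by-rule inspection, specifically confirming that homogeneity really forbids a cycle from ``passing through'' a rule on two edges of the same label --- for instance, a putative cycle traversing $y_i\to(w\leftarrow y_i,z_i)\to z_i$ is excluded because the edges to $y_i$ and $z_i$ both carry label $p$ and are therefore oriented identically relative to the rule vertex --- together with a quick check that no atom of $\Pi(\Phi)$ occurs in two different parts ($h$, $p$, or $n$) of the same rule, so that the labelling underlying the notion of a homogeneous orientation is unambiguous.
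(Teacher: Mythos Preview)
Your proof is correct and follows essentially the same strategy as the paper: both establish that $D\setminus\{w\}$ is acyclic by exploiting that, after removing $w$, every rule vertex is incident only to edges of a single label (all $h$ or all $p$) and is therefore a source or a sink under any homogeneous orientation. The paper states this observation directly and concludes in one step, whereas you derive it via the equivalent formulation ``every directed cycle must visit $w$'' together with an explicit rule-by-rule inspection; the underlying idea is the same.
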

\begin{proof}
  Let $D$ be a homogeneous orientation of
  $\incG(\Pi(\Phi))$ and let $G = \sincG(\Pi(\Phi)$.
  First observe that in $G \setminus \{w\}$ every rule
  vertex is either only incident to edges with label $h$ or
  to edges of label $p$. Hence, as $D$ is a homogeneous
  orientation, we obtain that every rule vertex of $D \setminus \{w\}$
  is either a source vertex (i.e., having only outgoing arcs) or a
  sink vertex (i.e., having only incoming arcs). So
  $D\setminus \{w\}$ cannot contain a cycle through a rule
  vertex. However, since there are no arcs between atom vertices in
  $D$, we obtain that $D\setminus \{w\}$ is acyclic, which shows
  that the cycle-rank of $D$ is at most one. 
\end{proof}

We can thus state the following result:

\begin{THE}
  ASP consistency is $\SigmaP{2}$\hy complete even for instances whose directed incidence
  graph has width at most one for any of the following width measures:
  cycle-rank, directed treewidth, directed
  pathwidth, DAG-width, Kelly-width, and D-width.
\end{THE}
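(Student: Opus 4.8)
The plan is essentially to read the theorem off from the preceding lemma, in exactly the way Theorem~1 was obtained from the analogous lemma on $\uncyclerank(\depG(\Pi(\Phi)))$. First I would recall the three ingredients of the reduction $\Phi \mapsto \Pi(\Phi)$ already set up above: (i) it is computable in polynomial time; (ii) by \cite[Theorem~38]{EiterGottlob95b}, a formula $\Phi \in \QBFF$ is valid iff $\Pi(\Phi)$ has an answer set; and (iii) validity of $\QBFF$ formulas is $\SigmaP{2}$\hy hard \cite{StockmeyerMeyer73}. Together these make $\Phi \mapsto \Pi(\Phi)$ a polynomial-time many-one reduction witnessing $\SigmaP{2}$\hy hardness of ASP consistency.

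Next I would combine this reduction with the preceding lemma to control the parameter on its image. The lemma states that \emph{every} homogeneous orientation $D$ of $\incG(\Pi(\Phi))$ satisfies $\cyclerank(D) \le 1$. Since the cycle-rank of a digraph is an upper bound for each of directed treewidth, directed pathwidth, DAG-width, Kelly-width, and D-width \cite{Gruber12,KaiserKreutzerRabinovichSiebertz14}, every such $D$ also has width at most one under each of those five measures, and cycle-rank itself is $\le 1$ directly by the lemma. Hence all programs produced by the reduction belong to the class of instances whose directed incidence graph (under any fixed homogeneous orientation convention) has width at most one under any of the six listed measures.

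Finally, for completeness I would note that ASP consistency is in $\SigmaP{2}$ with no restriction on the instance \cite{EiterGottlob95b}, hence in particular on this restricted class, which together with the hardness above gives $\SigmaP{2}$\hy completeness. I do not anticipate a genuine obstacle: all the real work sits in the lemma, and the only point worth a sentence is that, because the lemma bounds the cycle-rank of \emph{every} homogeneous orientation, the hardness is independent of which homogeneous orientation one adopts as ``the'' directed incidence graph, so no canonical orientation needs to be singled out.
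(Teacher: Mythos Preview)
Your proposal is correct and mirrors the paper's approach exactly: the paper does not give a separate proof for this theorem but simply states ``We can thus state the following result'' after the lemma bounding the cycle-rank of every homogeneous orientation of $\incG(\Pi(\Phi))$, relying on the same three ingredients you list plus the propagation of hardness from cycle-rank to the other directed width measures (Figure~\ref{fig:prophard}). Your additional remark that the lemma covers \emph{every} homogeneous orientation, so no canonical choice is needed, is a nice clarification that the paper leaves implicit.
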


\section{Clique-Width}

The results in~\cite{GottlobPichlerWei10} imply that bounding the clique-width of the signed incidence graph of a program leads to tractability.

\begin{proposition}
For a program $\Pi$ such that the clique-width of its signed incidence graph is bounded by a constant, we can decide in linear time whether $\Pi$ has an answer set.
\end{proposition}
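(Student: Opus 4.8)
The plan is to reduce to the meta-theorem of Courcelle, Makowsky, and Rotics~\cite{CourcelleMakowskyRotics00}, in the way already anticipated in the introduction. First I would recall from~\cite{GottlobPichlerWei10} that the existence of an answer set of a program $\Pi$ is expressible by a single, fixed $\mathrm{MSO}_1$ sentence $\varphi$ interpreted over the labelled incidence structure of $\Pi$: the universe consists of the atom and rule vertices of $\incG(\Pi)$, the binary incidence relation is available, and the three edge labels $\{h,p,n\}$ distinguish whether an atom occurs in the head, the positive body, or the negative body of a rule --- that is, exactly the information recorded in $\sincG(\Pi)$. The point to check is that $\varphi$ quantifies only over individual elements and over \emph{sets of elements} (a candidate set $M$ of atoms, and witnessing submodels of the reduct), never over sets of edges, so that it genuinely lies in $\mathrm{MSO}_1$ and not merely in $\mathrm{MSO}_2$.

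Second, I would note that $\sincG(\Pi)$ is an edge-labelled graph over the fixed finite label set $\{h,p,n\}$, so the pertinent width notion is \emph{signed} clique-width, and that both the meta-theorem and the clique-width approximation machinery extend verbatim from plain graphs to such labelled graphs (equivalently, to relational structures whose signature has a bounded number of binary relations, one per label). Consequently, bounding the signed clique-width of $\sincG(\Pi)$ is the same as bounding the clique-width of the relational structure over which $\varphi$ is evaluated, and the hypothesis of the proposition is precisely the hypothesis needed for the meta-theorem.

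Third, given that this clique-width is bounded by a constant $k$, I would obtain a $k'$-expression for $\sincG(\Pi)$ with $k'$ depending only on $k$ via the approximation algorithm of Oum and Seymour (or the construction of Kant\'e)~\cite{OumSeymour06,Kante07}; with such a cwd-expression in hand, the Courcelle--Makowsky--Rotics algorithm decides $\varphi$ in time linear in $|\sincG(\Pi)|$, hence linear in the size of $\Pi$. As is customary for applications of this kind --- and as discussed above for the verification setting --- one may assume that a cwd-expression is supplied together with $\Pi$, which is what makes the overall bound linear rather than merely polynomial.

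The main obstacle, and essentially the only content beyond quoting the meta-theorem, is the first step: verifying that the answer-set semantics is captured by an $\mathrm{MSO}_1$ sentence over $\sincG(\Pi)$. Concretely one must express (i) that a set $M \subseteq \A(\Pi)$ is a classical model of every rule in $\R(\Pi)$, which is first-order once the head/positive-body/negative-body predicates are available, and (ii) subset-minimality of $M$ with respect to the reduct $\Pi^M$, i.e.\ that there is no $N \subsetneq M$ with $N \in \mods(\Pi^M)$; the phrase ``there is no $N \subsetneq M$'' is a set quantifier, and ``$N$ is a model of $\Pi^M$'' unfolds --- using the definition of the reduct, which only filters rules according to $n(r)\cap M=\emptyset$ --- into a first-order condition referring to $M$, $N$, and the three label predicates. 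Since~\cite{GottlobPichlerWei10} already performs exactly this encoding, what remains is merely to observe that their formula lives on the signed incidence structure (equivalently $\sincG(\Pi)$) and never leaves $\mathrm{MSO}_1$, so that the meta-theorem applies with clique-width in place of treewidth.
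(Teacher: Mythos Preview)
Your proposal is correct and takes essentially the same approach as the paper: the proposition is stated there as a direct consequence of the $\mathrm{MSO}_1$ encoding of answer-set existence from~\cite{GottlobPichlerWei10} together with the Courcelle--Makowsky--Rotics meta-theorem~\cite{CourcelleMakowskyRotics00}, and the paper gives no further proof beyond the sentence ``This result has been established via a formulation of ASP consistency as an MSO$_1$ formula.'' Your write-up simply unpacks this in more detail, including the observation that the formula quantifies only over vertex sets and that the signed labels $\{h,p,n\}$ are what make $\sincG(\Pi)$ the right structure.
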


This result has been established via a formulation of ASP consistency as an MSO$_1$ formula.
Formulating a problem in this logic automatically gives us an FPT algorithm.
However, such algorithms are primarily of theoretical interest due to huge constant factors, and for actually solving problems, it is preferable to explicitly design dynamic programming algorithms~\cite{Cygan15}.

Since our main tractability result considers the clique-width of an
edge-labeled graph, i.e., the signed incidence graph, we will
introduce clique-width for edge-labeled graphs. This definition also
applies to graphs without edge-labels by considering all edges
to be labeled with the same label.
A \emph{$k$\hy graph}, for $k > 0$, is a graph whose
vertices are labeled by integers from
$\{1,\dots,k\}=:[k]$. Additionally, we also allow for the edges of a
$k$-graph to be labeled by some arbitrary but finite set of labels
(in our case the labels will correspond to the signs of the signed incidence graph).
The labeling of the vertices of a graph $G=(V,E)$ is formally denoted
by a function $\L: V \rightarrow [k]$.
We consider an arbitrary graph as a $k$\hy graph with all vertices labeled by~$1$.
We call the $k$\hy graph consisting of exactly one vertex~$v$ (say,
labeled by $i\in [k]$) an \emph{initial $k$\hy graph} and
denote it by~$i(v)$.

Graphs can be constructed from initial $k$\hy graphs by means of
repeated application of the following three operations.
\begin{itemize}
\item \emph{Disjoint union} (denoted by $\oplus$);
\item \emph{Relabeling}: changing all labels $i$ to $j$ (denoted by
  $\rho_{i\rightarrow j}$);
\item \emph{Edge insertion}: connecting all vertices labeled by $i$
  with all vertices labeled by $j$ via an edge with label $\ell$
  (denoted by
  $\edge{i}{j}{\ell}$);
	$i\neq j$; already existing edges are not doubled.
\end{itemize}
A construction of a $k$\hy graph $G$ using the above operations can be represented
by an algebraic term composed of $i(v)$, $\oplus$, $\rho_{i \rightarrow j}$, and
$\edge{i}{j}{\ell}$, ($i,j\in [k]$, and $v$ a vertex
).  Such a term is then called
a \emph{cwd\hy expression defining} $G$.
For any cwd-expression $\sigma$, we use $\L_\sigma: V \to [k]$ to denote the labeling of the graph defined by $\sigma$.
A \emph{$k$\hy expression} is a
$\cwd$\hy expression in which at most $k$ different labels occur.  
The set of all $k$\hy expressions is denoted by $\CWk$.

As an example consider the complete bipartite graph $K_{n,n}$ with
bipartition $A=\{a_1,\dotsc, a_n\}$ and $B=\{b_1,\dotsc, b_n\}$ and
assume that all edges of $K_{n,n}$ are labeled with the label $\ell$. 
A cwd\hy
expression of $K_{n,n}$ using at most two labels is given by the
following steps: (1) introduce all vertices in $A$ using label
$1$, (2) introduce all vertices in $B$ using label $2$, (3) 
take the disjoint union of all these vertices, and (4) add all edges 
between vertices with label $1$ and vertices with label $2$,
i.e., such a cwd\hy expression is given by $\edge{1}{2}{\ell}(1(a_1) \oplus
\dotsb \oplus 1(a_n) \oplus 2(b_1) \oplus \dotsb \oplus 2(b_n))$.
As a second example consider the complete graph $K_n$ on $n$
vertices, where all edges are labeled with label $\ell$. 
A cwd\hy expression for $K_n$ using at most two labels can
be obtained by the following iterative process: Given a cwd\hy
expression $\sigma_{n-1}$ for $K_{n-1}$, where every vertex is labeled with label
$1$, one takes the disjoint
union of $\sigma_{n-1}$ and $2(v)$ (where $v$ is the vertex only contained
in $K_n$ but not in $K_{n-1}$), adds all edges between vertices with
label $1$ and vertices with label $2$, and then relabels label $2$ to
label $1$. Formally, the cwd\hy expression $\sigma_n$ for $K_n$ is given by $(\rho_{2\rightarrow 1}(\edge{1}{2}{\ell}(\sigma_{n-1} \oplus
2(v_2)))$. 

\begin{definition}
The \emph{clique-width} of a graph $G$, $\cwd(G)$, is the smallest integer $k$ such that $G$ can be defined by a $k$-expression.
\end{definition}

Our discussion above thus witnesses that
complete (bipartite) graphs have clique-width $2$. 
Furthermore, 
co-graphs also have 
clique-width $2$ (co-graphs are exactly given by the graphs which
are $P_4$-free, 
i.e., whenever there is a path $(a,b,c,d)$ in the graph
then $\{a,c\}$, $\{a,d\}$ or $\{b,d\}$ is also an edge of the graph)
and
trees have clique-width $3$.

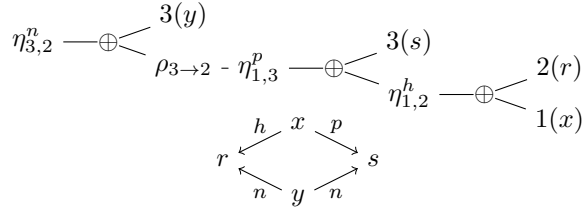
\begin{figure}[t]
\centering
\begin{tikzpicture}[grow=right, level distance=10mm, sibling distance=7mm]
\node {$\edge{3}{2}{n}$}
	child {node [inner sep=0mm] {$\oplus$}
		child {node {$\rho_{3 \rightarrow 2}$}
			child {node {$\edge{1}{3}{p}$}
				child {node [inner sep=0mm] {$\oplus$}
					child {node {$\edge{1}{2}{h}$}
						child {node [inner sep=0mm] {$\oplus$}
							child {node {$1(x)$}}
							child {node {$2(r)$}}
						}
					}
					child {node {$3(s)$}}
				}
			}
		}
		child {node {$3(y)$}}
	};
\end{tikzpicture}

\begin{tikzpicture}[->, scale=1]
\path[use as bounding box] (0,-0.5) rectangle (2,0.2);
\node (r) at (0,0) {$r$};
\node (x) at (1,0.5) {$x$};
\node (y) at (1,-0.5) {$y$};
\node (s) at (2,0) {$s$};
\draw (x) -- (r) node [midway, above, font=\footnotesize] {$h$};
\draw (x) -- (s) node [midway, above, font=\footnotesize] {$p$};
\draw (y) -- (r) node [midway, below, font=\footnotesize] {$n$};
\draw (y) -- (s) node [midway, below, font=\footnotesize] {$n$};
\end{tikzpicture}
\caption{A parse tree (top) of a $3$-expression for $\sincG(\Pi)$ (bottom), where $\Pi$ is the program
consisting of the rules $x \leftarrow \neg y$ and $\leftarrow x, \neg y$}
\label{fig:k-expression}
\end{figure}

We have  already introduced the notion of incidence graphs (resp.\ signed incidence graphs) 
of a program in Section~\ref{sec:prel}. We thus can use cwd-expressions to represent programs.

\begin{example}
\label{ex:k-expression}
Let $\Pi$ be the program with $\A(\Pi) = \{x,y\}$ and $\R(\Pi) = \{r,s\}$,
where $r$ is the rule $x \leftarrow \neg y$ and
$s$ is the rule $\leftarrow x, \neg y$.
Its signed incidence graph $\sincG(\Pi)$ can be constructed by the $3$-expression
$
\edge{3}{2}{n}\Big(\rho_{3 \rightarrow 2}\big(\edge{1}{3}{p}(\edge{1}{2}{h}(1(x) \oplus 2(r)) \oplus 3(s))\big) \oplus 3(y)\Big)$, as depicted in Figure~\ref{fig:k-expression}.
\end{example}

Since every $k$-expression of the signed incidence graph can be
transformed into a $k$-expression of the unsigned incidence graph (by
replacing all operations of the form $\edge{i}{j}{\ell}$ with
$\edge{i}{j}{\alpha}$, where $\alpha$ is new label), it holds that
$\cwd(\incG(\Pi)) \leq \cwd(\sincG(\Pi))$. 
\begin{proposition}\label{pro:unsigned-signed-cw}
  Let $\Pi$ be a program. It holds that $\cwd(\incG(\Pi)) \leq
  \cwd(\sincG(\Pi))$,
  and
  there is a class $\CCC$ of programs 
  such that, for each $\Pi \in \CCC$, $\cwd(\incG(\Pi))=2$ but
  $\cwd(\sincG(\Pi))$ is unbounded.
\end{proposition}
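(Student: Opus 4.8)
The inequality $\cwd(\incG(\Pi)) \le \cwd(\sincG(\Pi))$ is already argued in the text preceding the statement: given a $k$-expression for $\sincG(\Pi)$, replace every edge-insertion $\edge{i}{j}{\ell}$ by $\edge{i}{j}{\alpha}$ with a single fixed label $\alpha$; this yields a $k$-expression for $\incG(\Pi)$ since the only difference between $\sincG(\Pi)$ and $\incG(\Pi)$ is the edge labels. So the plan is to focus entirely on constructing the separating class $\CCC$.

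The idea is to pick a family of programs whose \emph{unsigned} incidence graph is a complete bipartite graph $K_{n,n}$ — which has clique-width $2$ by the examples in the excerpt — but where the three labels $\{h,p,n\}$ are distributed on the edges in a way that forces large clique-width of $\sincG(\Pi)$. Concretely, I would take atoms $a_1,\dots,a_n$ and rules $r_1,\dots,r_n$, and arrange that atom $a_i$ occurs in rule $r_j$ with a sign that encodes (say) the $(i,j)$-entry of a hard ``pattern'' matrix over $\{h,p,n\}$. For instance, let $a_i$ occur in the head of $r_j$ iff some fixed bipartite graph $H_n$ on $[n]\times[n]$ contains the edge $\{a_i,r_j\}$, and in the positive body otherwise; a rule like $r_j:\ (\bigvee_{i\in S_j} a_i) \leftarrow (a_i)_{i\notin S_j}$ realizes any desired $\{h,p\}$-pattern. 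Since $\incG(\Pi)$ is then exactly $K_{n,n}$ (every atom is in every rule), $\cwd(\incG(\Pi))=2$. For the lower bound I would invoke the known fact that the clique-width of an edge-labeled (equivalently, two-edge-colored) complete bipartite graph is unbounded when the color pattern is ``generic'' — this is essentially the same phenomenon used by Fischer, Makowsky and Ravve, and ultimately rests on the fact that grids, or bipartite graphs with $\Theta(n^2)$ distinct neighborhood-types under the coloring, have clique-width $\Omega(\sqrt{n})$. A clean way to make this rigorous: choose the $\{h,p\}$-pattern so that $\sincG(\Pi)$ restricted to, say, the $h$-labeled edges is exactly an $\frac{n}{2}\times\frac{n}{2}$ grid graph (each vertex of the grid is an atom or a rule, adjacency along grid lines realized by the rule memberships), and the $p$-labeled edges fill in the rest of $K_{n,n}$. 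Then the $h$-labeled subgraph is an induced (in the labeled sense) subgraph of $\sincG(\Pi)$, and since clique-width does not increase under taking labeled induced subgraphs (one can restrict a cwd-expression to a vertex subset and delete edge-insertions of unwanted labels), $\cwd(\sincG(\Pi)) \ge \cwd(\text{grid}) = \Omega(\sqrt n)$, which is unbounded.

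So the steps, in order, are: (1) note the easy direction is already done; (2) fix $n$, define the atoms $a_1,\dots,a_n$ and rules $r_1,\dots,r_n$, and describe exactly which atoms go in the head versus the positive body of each rule so that (a) every atom appears in every rule, forcing $\incG(\Pi)=K_{n,n}$, and (b) the $h$-labeled edges of $\sincG(\Pi)$ form (the incidence pattern of) a large grid; (3) conclude $\cwd(\incG(\Pi))=2$ from the worked example on $K_{n,n}$; (4) prove that deleting the $p$-labeled edge-insertions from a cwd-expression for $\sincG(\Pi)$ gives a cwd-expression of no larger width for the $h$-subgraph, hence $\cwd(\sincG(\Pi)) \ge \cwd$ of the grid; (5) invoke the classical $\Omega(\sqrt n)$ clique-width lower bound for grids to get unboundedness over the class $\CCC = \{\Pi_n : n \ge 1\}$.

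The main obstacle is step (4) together with correctly engineering step (2): I need to make sure that the combinatorial object carved out by a single edge-label is genuinely a hard-clique-width graph \emph{and} is obtainable as a ``label-restricted induced subgraph'' in a way that is monotone for clique-width. The monotonicity claim itself is standard but needs a careful one-line justification (restricting to a vertex set and dropping edge-insertions whose endpoints' labels have been rerouted never forces new labels), and the grid realization needs the rule-set to be large enough that a $\sqrt n \times \sqrt n$ grid embeds — a constant-factor blow-up in $n$ that does not affect the asymptotic conclusion. An alternative to the grid, if the embedding bookkeeping is annoying, is to cite directly the unboundedness of clique-width for $2$-edge-colored $K_{n,n}$'s realizing all $n\times n$ $0/1$-matrices (a counting argument: boundedly many cwd-expressions of bounded width but $2^{n^2}$ matrices), which sidesteps grids entirely; I would likely present that counting version as it is the shortest path to the stated ``unbounded'' conclusion.
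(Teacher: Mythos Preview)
Your approach is essentially the same as the paper's: make the unsigned incidence graph a complete bipartite graph (clique-width $2$), encode a grid via the $h$-versus-$p$ labeling, and invoke the unbounded clique-width of grids; the paper takes $n^2$ atoms and $n^2$ rules so that an $n\times n$ grid sits inside $K_{n^2,n^2}$, and asserts the monotonicity you spell out in step~(4). One slip to fix: with $n$ atoms and $n$ rules you only have $2n$ vertices, so an $\frac n2\times\frac n2$ grid (which has $\Theta(n^2)$ vertices) cannot be realized---your later $\sqrt n\times\sqrt n$ is the correct size, and matches the paper's parameterization up to renaming.
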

For showing the second statement of the above proposition, consider a
program $\Pi_n$ that has $n^2$ atoms and $n^2$ rules (for some $n
\in \N$),
such that every atom occurs in every rule of $\Pi_n$. Because
the incidence graph is a complete bipartite graph it has
clique-width two and moreover it contains a grid $G$ of size $n\times
n$ as a subgraph. Assume that $\Pi_n$ is defined in such a way that
 an atom $a$ occurring in a rule
$r$ is in the head of $r$ if the edge between $a$ and $r$ occurs in
the grid $G$ and otherwise $a$ is in the (positive) body of $r$.
Then, the clique-width of 
 $\sincG(\Pi_n)$
is at least the clique-width of the 
$n\times n$ grid $G$, which grows with
$n$~\cite{KaminskiLozinMilanic09}. Hence, the class $\CCC$ containing
$\Pi_n$ for every $n \in \N$ shows the second statement of the above proposition.


\subsection{Algorithms}\label{section:algorithms}

In this section, we provide our dynamic programming 
algorithms for deciding existence of an answer set.
We start with 
the classical semantics for programs,
where it is sufficient to just slightly
adapt (a simplified version of) the algorithm for SAT by~\cite{FischerMakowskyRavve06}.
For answer-set semantics, we then 
extend this algorithm in order to deal with the intrinsic higher
complexity of this semantics.

Both algorithms follow the same basic 
principles
by making use of a $k$\hy expression $\sigma$ 
defining a program $\Pi$ 
via its signed incidence graph
in the following way:
We assign certain objects 
to each subexpression of $\sigma$ and
manipulate these objects in a 
bottom-up traversal of the parse tree of the $k$-expression such that the objects in the
root of the parse tree then provide the necessary information to decide
the problem under consideration. The size of these objects
is bounded in terms of $k$ (and independent of the size of $\Pi$) 
and the number of such objects required is linear
in the size of $\Pi$.
Most importantly, we will show that 
these objects can also be efficiently computed 
for bounded $k$. 
Thus, we will obtain the desired 
linear running time.


\subsubsection{Classical Semantics}

\begin{definition}\label{def:kq}
A tuple $Q=(T,F,U)$ with $T,F,U\subseteq [k]$ is called a 
\emph{$k$-triple}, and we refer to its parts using 
$Q_T=T$, $Q_F=F$, and $Q_U=U$. 
The set of all $k$-triples is given by $\mathcal{Q}_k$. 
\end{definition}

The intuition of a triple $(T,F,U)$ is to characterize a set of interpretations $I$ in the following way:
\begin{itemize}
\item For each $i\in T$, at least one atom with label $i$ is true in $I$;
\item for each $i\in F$, at least one atom with label $i$ is false in $I$;
\item for each $i\in U$, there is at least one rule with label $i$ that is ``not satisfied yet''.
\end{itemize}

Formally, the ``semantics'' of a $k$-triple $Q$ with respect to a given program $\Pi$ is given as follows.

\begin{definition}
\label{def:pi-interpretation}
Let $Q\in \mathcal{Q}_k$ and $\Pi$ be a program whose signed incidence graph $(V,E)$ is labeled by $\L: V \rightarrow[k]$.
A $\Pi$-\emph{interpretation} of $Q$
is a set $I \subseteq \A(\Pi)$ that satisfies 
\begin{align*}
Q_T &= \{\L(a) \mid a \in I\},\\
Q_F &= \{\L(a) \mid a \in \A(\Pi) \setminus I\},\mbox{\ and}\\
Q_U &= \{\L(r) \mid r \in \R(\Pi),\; I \notin \mods(r)\}.
\end{align*}
\end{definition}

%

\begin{example}
Consider again program $\Pi$ from Example~\ref{ex:k-expression} and the $3$-expression $\sigma$ from Figure~\ref{fig:k-expression}.
Let $Q$ be the $3$-triple 
$(\{1\}, \{3\}, \{2\})$.
Observe that $\{x\}$ is a $\Pi$-interpretation of $Q$:
It sets $x$ to true and $y$ to false, and $\L_\sigma(x) \in Q_T$ and $\L_\sigma(y) \in Q_F$ hold as required;
the rule $s$ is not satisfied by $\{x\}$, and indeed $\L_\sigma(s) \in Q_U$.
We can easily verify that no other subset of $\A(\Pi)$ is a $\Pi$-interpretation of $Q$:
Each $\Pi$-interpretation of $Q$ must set $x$ to true and $y$ to false, as these are the only atoms labeled with $1$ and $3$, respectively.
\end{example}

%

We use the following notation for $k$-triples $Q$, $Q'$, and
set
$S\subseteq [k]$.
\begin{itemize}
\item $Q\oplus Q' = ( 
	Q_T \cup Q'_T, 
	Q_F \cup Q'_F, 
	Q_U \cup Q'_U )$
\item $Q^{i\rightarrow j} = (
	Q_T^{i\rightarrow j},
	Q_F^{i\rightarrow j},
	Q_U^{i\rightarrow j})$
	where 
	for $S\subseteq [k]$, 
	$$
	S^{i\rightarrow j} = S \setminus \{i\} \cup \{j\}\mbox{\ if $i\in S$ and\ } 
	S^{i\rightarrow j} = S \mbox{\ otherwise.}
	$$	
\item 
$Q^{S,i,j}=
(Q_T,Q_F,Q_U\setminus\{j\})$ if $i\in S$;
$Q^{S,i,j}=Q$ otherwise.
\end{itemize}

Using these abbreviations, we define our dynamic programming algorithm:
We assign to each subexpression $\sigma$ of a given $k$-expression a set of
triples by recursively defining a function $f$, which associates to $\sigma$ a
set of $k$-triples as follows.

\begin{definition}\label{def:f}
The function 
$f: \CWk \rightarrow  2^{\mathcal{Q}_k}$
is recursively defined along the structure of $k$\hy expressions as follows.
\begin{itemize}
 \item $f(i(v))=
	\begin{cases}
		\big\{\big(\{i\},\emptyset,\emptyset\big),\; \big(\emptyset,\{i\},\emptyset\big)\big\} &\text{if $v$ is an atom node} \\
		\big\{\big(\emptyset,\emptyset,\{i\}\big)\big\} &\text{if $v$ is a rule node}
	\end{cases}
	$
 \item $f(\sigma_1 \union \sigma_2)=\{
		Q\oplus Q' 
		 \mid 
			 Q\in f(\sigma_1), Q'\in f(\sigma_2)\}$ 
 \item $f(\relabel{i}{j}(\sigma))=\{  
		Q^{i\rightarrow j}   \mid Q\in  f(\sigma) \}$ 
 \item 
$f(\edge{i}{j}{h}(\sigma))=
f(\edge{i}{j}{n}(\sigma))= 
\{ Q^{Q_T,i,j} \mid Q \in f(\sigma)\}$
\item
$f(\edge{i}{j}{p}(\sigma))=
\{ Q^{Q_F,i,j} \mid Q \in f(\sigma)\}$
\end{itemize}
\end{definition}

\begin{example}
Consider again program $\Pi$ from Example~\ref{ex:k-expression} and the $3$-expression depicted in Figure~\ref{fig:k-expression}.
To break down the structure of $\sigma$,
let $\sigma_1, \ldots, \sigma_6$ be subexpressions of $\sigma$ such that
$\sigma = \edge{3}{2}{n}(\sigma_1)$,
$\sigma_1 = \sigma_2 \oplus 3(y)$,
$\sigma_2 = \rho_{3 \rightarrow 2}(\sigma_3)$,
$\sigma_3 = \edge{1}{3}{p}(\sigma_4)$,
$\sigma_4 = \sigma_5 \oplus 3(s)$,
$\sigma_5 = \edge{1}{2}{h}(\sigma_6)$ and
$\sigma_6 = 1(x) \oplus 2(r)$.
We get
$f(1(x)) = \big\{(\{1\}, \emptyset, \emptyset), (\emptyset, \{1\}, \emptyset)\big\}$ and
$f(2(r)) = \big\{(\emptyset, \emptyset, \{2\})\big\}$.
These sets are then combined to
$f(\sigma_6) = \big\{(\{1\}, \emptyset, \{2\}), (\emptyset, \{1\}, \{2\})\big\}$.
The program defined by $\sigma_6$ consists of atom $x$ and rule $r$, but $x$ does not occur in $r$ yet.
Accordingly, the $k$-triple $(\{1\}, \emptyset, \{2\})$ models the situation where $x$ is set to true, which does not satisfy $r$ (since the head and body of $r$ are still empty), hence the label of $r$ is in the last component;
the $3$-triple $(\emptyset, \{1\}, \{2\})$ represents $x$ being set to false, which does not satisfy $r$ either.
Next, $\sigma_5$ causes all atoms with label $1$ (i.e., just $x$) to be inserted into the head of all rules with label $2$ (i.e., just $r$), and we get
$f(\sigma_5) = \big\{(\{1\}, \emptyset, \emptyset), (\emptyset, \{1\}, \{2\})\big\}$.
We obtain the first element $(\{1\}, \emptyset, \emptyset) = Q^{Q_T,1,2}$ from $Q = (\{1\}, \emptyset, \{2\})$ by
removing the label $2$ from $Q_U$ because $1 \in Q_T$.
The idea is that the heads of all rules labeled with $2$ now contain all atoms labeled with $1$, so these rules become satisfied by every interpretation that sets some atom labeled with $1$ to true.
Next, $\sigma_4$ adds the rule $s$ with label $3$ and we get
$f(\sigma_4) = \big\{(\{1\}, \emptyset, \{3\}), (\emptyset, \{1\}, \{2,3\})\big\}$.
The edge added by $\sigma_3$ adds all atoms with label $1$ (i.e., just $x$) into the positive body of all rules with label $3$ (i.e., just $s$),
which results in
$f(\sigma_3) = \big\{(\{1\}, \emptyset, \{3\}), (\emptyset, \{1\}, \{2\})\big\}$.
Observe that the last component of the second element no longer contains $3$,
i.e., setting $x$ to false makes $s$ true.
Now the label $3$ is renamed to $2$, and we get
$f(\sigma_2) = \big\{(\{1\}, \emptyset, \{2\}), (\emptyset, \{1\}, \{2\})\big\}$.
Note that now $r$ and $s$ are no longer distinguishable since they now share the same label.
Hence all operations that add edges to $r$ will also add edges to $s$ and vice versa.
In $\sigma_1$, atom $y$ is added with label $3$ and we get four $3$-triples in $f(\sigma_1)$:
From $(\{1\}, \emptyset, \{2\})$ in $f(\sigma_2)$ we obtain $(\{1,3\}, \emptyset, \{2\})$ and $(\{1\}, \{3\}, \{2\})$,
and from $(\emptyset, \{1\}, \{2\})$ in $f(\sigma_2)$ we get $(\{3\}, \{1\}, \{2\})$ and $(\emptyset, \{1,3\}, \{2\})$.
In $\sigma$, we add a negative edge from all atoms labeled with $3$ (i.e., just $y$) to all rules labeled with $2$ (both $r$ and $s$).
From $(\{1,3\}, \emptyset, \{2\})$ in $f(\sigma_1)$ we now get $(\{1,3\}, \emptyset, \emptyset)$,
from $(\{3\}, \{1\}, \{2\})$ we get $(\{3\}, \{1\}, \emptyset)$,
and the $3$-triples $(\{1\}, \{3\}, \{2\})$ and $(\emptyset, \{1,3\}, \{2\})$ from $f(\sigma_1)$ occur unmodified in $f(\sigma)$.
As we will prove shortly, for each $k$-triple $Q$ in $f(\sigma)$, there is a $\Pi$-interpretation of $Q$.
So if there is a $k$-triple $Q$ in $f(\sigma)$ such that $Q_U = \emptyset$, then $\Pi$ has a classical model due to the definition of $Q_U$.
For instance, $(\{1,3\}, \emptyset, \emptyset)$ has a $\Pi$-interpretation $\{x,y\}$, which is obviously a model of $\Pi$.
\end{example}

We now prove correctness of our algorithm:

\begin{lemma}
\label{lem:f-sound-and-complete}
Let $\Pi$ be a program and $\theta$ be a $k$-expression for $\sincG(\Pi)$.
For every set $I \subseteq \A(\Pi)$, there is a $k$-triple $Q \in f(\theta)$ such that
$I$ is a $\Pi$-interpretation of $Q$, and
for every $k$-triple $Q \in f(\theta)$ there is a set $I \subseteq \A(\Pi)$ such that
$I$ is a $\Pi$-interpretation of $Q$.
\end{lemma}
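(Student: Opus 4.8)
The plan is to prove the statement by structural induction on the $k$-expression $\theta$, which first requires generalizing it to every subexpression $\sigma$ of $\theta$. For such a $\sigma$, let $G_\sigma$ be the (vertex- and edge-labeled) graph it defines and let $\Pi_\sigma$ be the \emph{partial program} it represents: the atoms of $\Pi_\sigma$ are the atom vertices of $G_\sigma$, its rules are the rule vertices of $G_\sigma$, and the head (resp.\ positive body, negative body) of a rule $r$ in $\Pi_\sigma$ consists of those atoms joined to $r$ in $G_\sigma$ by an edge labeled $h$ (resp.\ $p$, $n$). Since $G_\theta=\sincG(\Pi)$, we have $\Pi_\theta=\Pi$. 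Define a \emph{$\Pi_\sigma$-interpretation of $Q$} exactly as in Definition~\ref{def:pi-interpretation}, but with $\Pi$ replaced by $\Pi_\sigma$, $\L$ replaced by $\L_\sigma$, and $\mods(r)$ evaluated against the head and body of $r$ inside $\Pi_\sigma$. The claim I would prove by induction on $\sigma$ is
\[
  f(\sigma)=\{\,Q\in\mathcal{Q}_k \mid Q \text{ has a } \Pi_\sigma\text{-interpretation}\,\}.
\]
The lemma is then the case $\sigma=\theta$: each $I\subseteq\A(\Pi)$ is a $\Pi$-interpretation of exactly one $k$-triple (the one induced by $I$), so the two assertions of the lemma together say precisely that $f(\theta)$ equals the right-hand side above.

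For the base case $\sigma=i(v)$: if $v$ is an atom, then $\Pi_\sigma$ has the single atom $v$ with $\L_\sigma(v)=i$ and no rules, so its only interpretations $\{v\}$ and $\emptyset$ induce exactly $(\{i\},\emptyset,\emptyset)$ and $(\emptyset,\{i\},\emptyset)$, matching $f(i(v))$; if $v$ is a rule, then $\Pi_\sigma$ has no atoms and the single rule $v$ with empty head and body, and the only candidate $I=\emptyset$ fails to be a model of $v$ (its body applies, its head is empty), so the induced triple is $(\emptyset,\emptyset,\{i\})$, again matching $f(i(v))$. The inductive steps for disjoint union and relabeling are bookkeeping: for $\sigma_1\union\sigma_2$ the vertex sets are disjoint (each vertex occurs once in a cwd-expression), $\Pi_{\sigma_1\union\sigma_2}$ is the disjoint union of $\Pi_{\sigma_1}$ and $\Pi_{\sigma_2}$, and since every rule's head and body live on a single side, the triple induced by any $I$ is the $\oplus$-combination of the triples induced by $I\cap\A(\Pi_{\sigma_1})$ and $I\cap\A(\Pi_{\sigma_2})$ (and every such combination is realized); applying the induction hypothesis gives $f(\sigma_1\union\sigma_2)=f(\sigma_1)\oplus f(\sigma_2)$. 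For $\relabel{i}{j}(\sigma)$ the program is unchanged and only $\L_\sigma$ is altered by $i\mapsto j$, so the triple induced by $I$ changes by the component-wise substitution $S\mapsto S^{i\rightarrow j}$, and the induction hypothesis closes this case.

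The substantive case is edge insertion. Since $\sincG(\Pi)$ has no edge between two atoms or between two rules, I may assume (after deleting vacuous edge operations and, where needed, swapping the two label arguments) that in every operation $\edge{i}{j}{\ell}$ occurring in $\theta$ the $i$-labeled vertices are atoms and the $j$-labeled vertices are rules. Then $\Pi_{\edge{i}{j}{\ell}(\sigma)}$ is obtained from $\Pi_\sigma$ by adding every atom with label $i$ to the $\ell$-part of every rule with label $j$, with nothing else changing. Fix $I\subseteq\A(\Pi_\sigma)$ and let $Q$ be the triple it induces with respect to $\Pi_\sigma$. For $\ell\in\{h,n\}$, a rule $r$ with label $j$ becomes a model of $I$ exactly when some atom with label $i$ is true in $I$ (a true atom then lies in the enlarged head, or blocks the enlarged negative body), i.e.\ exactly when $i\in Q_T$; the satisfaction status of every other rule is unchanged. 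Hence the triple induced by $I$ with respect to the new program is $Q^{Q_T,i,j}$. For $\ell=p$, a rule $r$ with label $j$ becomes a model of $I$ exactly when some atom with label $i$ is false in $I$ -- then $p(r)$ is no longer contained in $I$ -- i.e.\ exactly when $i\in Q_F$; so the new triple is $Q^{Q_F,i,j}$. In every case the induced transformation of triples coincides with the clause defining $f$ for that operation, and because that transformation is applied uniformly to all triples, the induction hypothesis yields both inclusions of the claim.

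The main obstacle I expect is the edge step, and within it the need to verify that the \emph{collective} label conditions ``$i\in Q_T$'' and ``$i\in Q_F$'' (``some atom of label $i$ is true, resp.\ false, in $I$'') capture exactly when a label-$j$ rule switches from ``not satisfied'' to ``satisfied'' -- that it only ever switches in that direction, that no rule of a different label is affected, and that the degenerate situations (no atom of label $i$, or no rule of label $j$) are handled correctly by the very same update. Getting the normal-form assumption on the edge operations right, so that the asymmetry between $i$ and $j$ in the definition of $f$ matches the atom/rule bipartition of the signed incidence graph, is the other point that needs care; the remaining cases are routine.
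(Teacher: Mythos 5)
Your proof is correct and follows essentially the same route as the paper's: structural induction over the subexpressions of $\theta$, with the only substantive case being edge insertion, where one checks that the conditions $i\in Q_T$ (for $h$/$n$ edges) and $i\in Q_F$ (for $p$ edges) characterize exactly when a label-$j$ rule switches from unsatisfied to satisfied. Your packaging of both directions as a single set equality (using that each $I$ induces a unique $k$-triple), and your explicit normal-form justification that each edge operation connects an all-atom class to an all-rule class, are minor refinements of the paper's argument rather than a different approach.
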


\begin{proof}
We prove the first statement by induction on the structure of a $k$-expression $\theta$ defining $\Pi$.
Let $\sigma$ be a subexpression of $\theta$, let $\Pi_\sigma$ denote the program defined by $\sigma$, and let $I \subseteq \A(\Pi_\sigma)$.

\medskip\noindent
If $\sigma = i(r)$, for $r \in \R(\Pi)$, then
$\A(\Pi_\sigma) = \emptyset$, so $I = \emptyset$.
Moreover, $\R(\Pi_\sigma)$ consists of an unsatisfiable rule (its head and body are empty).
Hence $I$ is a $\Pi_\sigma$-interpretation of 
$(\emptyset, \emptyset, \{i\})$ in $f(\sigma)$.

\medskip\noindent
If $\sigma = i(a)$, for $a \in \A(\Pi)$, then
$\A(\Pi_\sigma) = \{a\}$ and $\R(\Pi_\sigma) = \emptyset$.
If $I = \emptyset$, then $I$ is a $\Pi_\sigma$-interpretation of the $k$-triple $(\emptyset, \{i\}, \emptyset)$ in $f(\sigma)$.
Otherwise $I = \{a\}$ and $I$ is a $\Pi_\sigma$-interpretation of the $k$-triple $(\{i\}, \emptyset, \emptyset)$ in $f(\sigma)$.

\medskip\noindent
If $\sigma = \sigma_1 \oplus \sigma_2$, let
$i \in \{1,2\}$,
$\Pi_i = \Pi_{\sigma_i}$ and
$I_i = I \cap \A(\Pi_i)$.
By definition of $\Pi_i$, it holds that $\A(\Pi) = \A(\Pi_1) \cup \A(\Pi_2)$, $\R(\Pi) = \R(\Pi_1) \cup \R(\Pi_2)$ and $I = I_1 \cup I_2$.
By induction hypothesis, $I_i$ is a $\Pi_i$-interpretation of some $k$-triple $Q_i$ in $f(\sigma_i)$.
By definition of $f$, there is a $k$-triple $Q$ in $f(\sigma)$ with
$Q_T = {Q_1}_T \cup {Q_2}_T$,
$Q_F = {Q_1}_F \cup {Q_2}_F$ and
$Q_U = {Q_1}_U \cup {Q_2}_U$.
This allows us to easily verify that $I$ is a $\Pi_\sigma$-interpretation of $Q$ by checking the conditions in Definition~\ref{def:pi-interpretation}.

\medskip\noindent
If $\sigma = \relabel{i}{j}(\sigma')$, then
$\Pi_\sigma = \Pi_{\sigma'}$
and, by induction hypothesis, $I$ is a $\Pi_{\sigma'}$-interpretation of some $k$-triple $Q$ in $f(\sigma')$.
By definition of $f$, the $k$-triple $Q^{i\rightarrow j}$ in $f(\sigma)$ is the result of replacing $i$ by $j$ in each of $Q_T$, $Q_F$ and $Q_U$.
Hence we can easily verify that $I$ satisfies all conditions for being a $\Pi_\sigma$-interpretation of $Q^{i\rightarrow j}$.

\medskip\noindent
If $\sigma = \edge{i}{j}{\ell}(\sigma')$, for $\ell \in \{h,n\}$, then
$\A(\Pi_\sigma) = \A(\Pi_{\sigma'})$.
Hence, by induction hypothesis, $I$ is a $\Pi_{\sigma'}$-interpretation of some $k$-triple $Q'$ in $f(\sigma')$.
We use $Q$ to denote the $k$-triple $Q'^{Q'_T,i,j}$, which is in $f(\sigma)$.
Since $Q_T = Q'_T$, $Q_F = Q'_F$
and $\L_\sigma = \L_{\sigma'}$,
$I$ satisfies the first two conditions for being a $\Pi_\sigma$-interpretation of $Q$.
It remains to check the third condition.

For every $j' \in [k] \setminus \{j\}$ it holds that
$j' \in Q_U$ if and only if $j' \in Q'_U$.
By induction hypothesis,
the latter is the case if and only if
there is a rule $r' \in \R(\Pi_{\sigma'})$ such that
$\L_{\sigma'}(r') = j'$ and $I \notin \mods(r')$.
This is equivalent to the existence of a rule
$r \in \R(\Pi_\sigma)$ 
such that
$\L_\sigma(r) = j'$,
$h(r) = h(r')$,
$p(r) = p(r')$,
$n(r) = n(r')$ and
$I \notin \mods(r)$,
since  
$\sincG(\Pi_\sigma)$
only differs from 
$\sincG(\Pi_{\sigma'})$
by additional edges that are 
not incident to $r$ due to $j' \neq j$.

It remains to check that $j \in Q_U$ if and only if
there is a rule $r \in \R(\Pi_\sigma)$ such that
$\L_\sigma(r) = j$ and $I \notin \mods(r)$.
First suppose toward a contradiction that
$j \in Q_U$ while
$I$ is a model of every rule $r \in \R(\Pi_\sigma)$ such that $\L_\sigma(r) = j$.
Since $Q_U \subseteq Q'_U$,
also $j \in Q'_U$
and by induction hypothesis
there is a rule $r' \in \R(\Pi_{\sigma'})$ such that $\L_{\sigma'}(r') = j$ and $I$ is not a model of $r'$.
There is a corresponding rule $r \in \R(\Pi_\sigma)$,
for which $\L_\sigma(r) = j$, $h(r') \subseteq h(r)$, $n(r') \subseteq n(r)$ and $p(r') = p(r)$ hold.
Since $I$ is a model of $r$ but not of $r'$,
$I$ contains some atom labeled with $i$ (by both $\L_{\sigma'}$ and $\L_{\sigma}$) because
all atoms in $h(r) \setminus h(r')$ and $n(r) \setminus n(r')$ are labeled with $i$.
By induction hypothesis, this implies $i \in Q'_T$,
which leads to the contradiction $j \notin Q_U$ by construction of $f$.

Finally, suppose toward a contradiction that
$j \notin Q_U$ and there is a rule $r \in \R(\Pi_\sigma)$ such that
$\L_\sigma(r) = j$ and $I \notin \mods(r)$.
The rule $r'$ corresponding to $r$ in $\Pi_{\sigma'}$ with $\L_{\sigma'}(r') = j$ is not satisfied by $I$ either,
since $h(r') \subseteq h(r)$, $n(r') \subseteq n(r)$ and $p(r') = p(r)$.
By induction hypothesis, this entails $j \in Q'_U$.
Due to $j \notin Q_U$, it holds that $i \in Q'_T$, so
there is an $a \in I$ with $\L_{\sigma'}(a) = \L_\sigma(a) = i$.
Due to the new edge from $a$ to $r$, either
$a \in h(r)$ or $a \in n(r)$.
This yields the contradiction that $I$ is a model of $r$.

\medskip\noindent
The case $\sigma = \edge{i}{j}{p}(\sigma')$ is symmetric.

\medskip\noindent
The proof of the second statement is similar.
\end{proof}


We can now state our FPT result for classical models:

\begin{THE}
\label{thm:classical-models-fpt}
Let $k$ be an integer and $\Pi$ be a program.
Given a $k$-expression for the signed incidence graph of $\Pi$, we can decide in linear time whether $\Pi$ has a model.
\end{THE}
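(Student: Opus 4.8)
The plan is to combine the soundness and completeness of $f$ established in Lemma~\ref{lem:f-sound-and-complete} with a running-time analysis of the bottom-up evaluation of $f$ along the parse tree of the given $k$-expression $\theta$ for $\sincG(\Pi)$.

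First I would isolate the decision criterion: \emph{$\Pi$ has a classical model if and only if $f(\theta)$ contains a $k$-triple $Q$ with $Q_U=\emptyset$.} For the ``if'' direction, Lemma~\ref{lem:f-sound-and-complete} yields a set $I\subseteq\A(\Pi)$ that is a $\Pi$-interpretation of $Q$; by Definition~\ref{def:pi-interpretation}, $Q_U=\{\L(r)\mid r\in\R(\Pi),\ I\notin\mods(r)\}=\emptyset$ forces every rule of $\Pi$ to be satisfied by $I$, i.e.\ $I\in\mods(\Pi)$. For the ``only if'' direction, given $I\in\mods(\Pi)$, Lemma~\ref{lem:f-sound-and-complete} supplies a $k$-triple $Q\in f(\theta)$ of which $I$ is a $\Pi$-interpretation, and again by Definition~\ref{def:pi-interpretation} we get $Q_U=\emptyset$. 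So the algorithm simply computes $f(\theta)$ bottom-up and checks whether some triple in $f(\theta)$ has empty third component.

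Second, the running-time analysis. The universe $\mathcal{Q}_k$ of all $k$-triples has size $2^{3k}$, so each value $f(\sigma)$ is a subset of a fixed-size set and can be stored and manipulated in time bounded by a function of $k$ only. Inspecting Definition~\ref{def:f} case by case (leaf, $\oplus$, $\relabel{i}{j}$, $\edge{i}{j}{\ell}$ for $\ell\in\{h,n,p\}$), $f(\sigma)$ is obtained by iterating over the triples (or pairs of triples) in the child sets and applying one of the constant-size operations $Q\oplus Q'$, $Q^{i\rightarrow j}$, $Q^{S,i,j}$, each computable in time depending only on $k$; hence every node of the parse tree is processed in time $g'(k)$ for some computable $g'$. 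The parse tree has $\bigO{|\A(\Pi)|+|\R(\Pi)|}$ leaves, and — after a standard normalization of $\theta$ to an irredundant $k$-expression, in which the number of $\oplus$-nodes is one less than the number of leaves and only $\bigO{k^2}$ relabel/edge operations survive between consecutive $\oplus$-nodes — it has $\bigO{k^2(|\A(\Pi)|+|\R(\Pi)|)}$ nodes in total. Multiplying, the whole computation runs in time $g(k)\cdot(|\A(\Pi)|+|\R(\Pi)|)$ for a computable $g$, i.e.\ linear time for fixed $k$ (and in any case linear in the size of the input, which includes $\theta$).

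The main obstacle is not correctness — that is entirely delivered by Lemma~\ref{lem:f-sound-and-complete} — but the book-keeping that keeps the per-node cost independent of $n$ and the number of nodes linear: one must argue that the given $k$-expression may be assumed to have size $\bigO{k^2 n}$ (by discarding redundant relabelings and edge insertions) and that operations on the constant-size universe $\mathcal{Q}_k$ take time depending on $k$ alone. Everything else is a routine unwinding of Definitions~\ref{def:pi-interpretation} and~\ref{def:f}.
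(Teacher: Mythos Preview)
Your proposal is correct and follows essentially the same approach as the paper: establish the decision criterion $Q_U=\emptyset$ via Lemma~\ref{lem:f-sound-and-complete} and Definition~\ref{def:pi-interpretation}, then argue linear-time computability of $f$. In fact you spell out the running-time analysis (bounding $|\mathcal{Q}_k|$ by $2^{3k}$ and normalizing the $k$-expression) in more detail than the paper, which dismisses it with ``it is easy to see that $f(\sigma)$ can be computed in linear time.''
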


\begin{proof}
Let $k$ be a constant, $\Pi$ be a program and $\sigma$ be a $k$-expression of $\sincG(\Pi)$.
We show that there is a model of $\Pi$
if and only if there is a $k$-triple $Q$ in $f(\sigma)$ with $Q_U=\emptyset$:
If $\Pi$ has a model $I$, then $I$ is a $\Pi$-interpretation of a $k$-triple $Q$ in $f(\sigma)$, by Lemma~\ref{lem:f-sound-and-complete},
and $Q_U = \emptyset$ by Definition~\ref{def:pi-interpretation}.
Conversely, if there is a $k$-triple $Q$ in $f(\sigma)$ with $Q_U = \emptyset$, then
there is a $\Pi$-interpretation $I$ of $Q$, by Lemma~\ref{lem:f-sound-and-complete},
and $Q_U = \emptyset$ implies that $I$ is a model of $\Pi$ by Definition~\ref{def:pi-interpretation}.
Finally, it is easy to see that $f(\sigma)$ can be computed in linear time.
\end{proof}

\subsubsection{Answer-Set Semantics}

For full disjunctive ASP we need a more involved data structure.

\begin{definition}\label{def:kqg}
A pair $(Q,\Gamma)$ with 
with $Q\in\mathcal{Q}_k$ and 
$\Gamma\subseteq \mathcal{Q}_k$ 
is called a 
\emph{$k$-pair}. 
The set of all $k$-pairs is given by $\mathcal{P}_k$. 
\end{definition}


Given a $k$-pair $(Q,\Gamma)$, the purpose of $Q$ is, as for classical semantics, to represent $\Pi$-interpretations $I$ (that in the end correspond to models).
Every $k$-triple in $\Gamma$ represents
sets $J$ of atoms such that $J \subset I$.
If, in the end, there is such a set $J$ that still satisfies every rule in the reduct w.r.t.\ $I$,
then we conclude that $I$ is not an answer set.

\begin{definition}
\label{def:pi-i-interpretation}
Let $Q\in \mathcal{Q}_k$,
let $\Pi$ be a program whose signed incidence graph $(V,E)$ is labeled by $\L: V \rightarrow[k]$,
and let $I \subseteq \A(\Pi)$.
A $\Pi^I$-\emph{interpretation} of $Q$
is a set $J \subseteq \A(\Pi)$ such that
\begin{align*}
Q_T &= \{\L(a) \mid a \in J\},\\
Q_F &= \{\L(a) \mid a \in \A(\Pi) \setminus J\}, \mbox{\ and}\\
Q_U &= \{\L(r) \mid r \in \R(\Pi),\; n(r) \cap I = \emptyset,\; J \notin \mods(r^+)\}.
\end{align*}
\end{definition}

We can now define our dynamic programming algorithm for ASP:

\begin{definition}\label{def:g}
The function 
$g: \CWk \rightarrow  2^{\mathcal{P}_k}$
is recursively defined along the structure of $k$\hy expressions as follows.

\begin{itemize}
 \item $g(i(v)) = \Big\{ 
			\big((\{i\},\emptyset,\emptyset), \; \big\{(\emptyset,\{i\},\emptyset)\big\} \big),
			\quad
			\big((\emptyset,\{i\},\emptyset),\;\emptyset \big)
		\Big\}$
\sv{		\\}
if $v$ is at atom node
 \item $g(i(v)) = \Big\{\big((\emptyset,\emptyset,\{i\}),\; \emptyset\big)\Big\}$ if $v$ is a rule node
 \item $g(\sigma_1 \union \sigma_2)=\{ \big(
		Q_1 \oplus Q_2, R_{Q_1, Q_2, \Gamma_1, \Gamma_2} \mid (Q_i,\Gamma_i)\in g(\sigma_i)\big)\}$,
		where
		$R_{Q_1, Q_2, \Gamma_1, \Gamma_2} = \big\{
			S_1 \oplus S_2 \mid S_i \in \Gamma_i
			\big\} \cup \big\{
			Q_1 \oplus S \mid S \in \Gamma_2
			\big\} \cup \big\{
			S \oplus Q_2 \mid S \in \Gamma_1
			\big\}
			\big)$
 \item $g(\relabel{i}{j}(\sigma))=\{ \big(( 
		Q^{i\rightarrow j}, 
		\{ 
			R^{i\rightarrow j} \mid R\in\Gamma\} \big)
 			\mid (Q,\Gamma) \in  g(\sigma) \}$ 
 \item 
$g(\edge{i}{j}{h}(\sigma))=
\{ \big( Q^{Q_T,i,j} , \{ R^{R_T,i,j} 
		\mid R\in \Gamma \}\big) \mid (Q,\Gamma)\in g(\sigma)\}$
 \item 
$g(\edge{i}{j}{p}(\sigma))=
\{ \big( Q^{Q_F,i,j} , \{ R^{R_F,i,j} 
		\mid R\in \Gamma \}\big) \mid (Q,\Gamma)\in g(\sigma)\}$
 \item 
$g(\edge{i}{j}{n}(\sigma))=
\{ \big( Q^{Q_T,i,j} , \{ R^{Q_T,i,j}
		\mid R\in \Gamma \}\big) \mid (Q,\Gamma)\in g(\sigma)\}$
\end{itemize}
\end{definition}

Note the use of $Q_T$ in $R^{Q_T,i,j}$ in the definition of 
$g(\edge{i}{j}{n}(\sigma))$:
Whenever an interpretation $I$ represented by $Q$ sets an atom from the negative body of a rule $r$ to true,
the rule $r$ has no counterpart in the reduct w.r.t.\ $I$,
so, for each subset $J$ of $I$, we remove $r$ from the set of rules whose counterpart in the reduct is not yet satisfied by $J$.



\begin{example}
Let $\Pi$ be the program consisting of a single rule $\leftarrow \neg x$, which we denote by $r$,
and let $\sigma = \edge{1}{2}{n}(1(x) \oplus 2(r))$.
Let $(Q,\Gamma)$ be the $k$-pair in $g(1(x))$ with
$Q = (\{1\}, \emptyset, \emptyset)$ and $\Gamma = \{(\emptyset, \{1\}, \emptyset)\}$.
The $k$-triple $Q$ represents the set of atoms $\{x\}$.
Since this set has the proper subset $\emptyset$, there is a $k$-triple in $\Gamma$ that indeed corresponds to this subset.
Now let $(Q,\Gamma) = ((\emptyset, \{1\}, \emptyset), \, \emptyset)$ be the other $k$-pair in $g(1(x))$.
Here $Q$ represents the empty set of atoms, which has no proper subsets, hence $\Gamma$ is empty.
For the single $k$-pair $((\emptyset, \emptyset, \{2\}), \, \emptyset)$ in $g(2(r))$, the situation is similar.
Next, at $g(1(x) \oplus 2(r))$, we combine every $k$-pair $(Q_1,\Gamma_1)$ from $g(1(x))$ with every $k$-pair $(Q_2,\Gamma_2)$ from $g(2(r))$ to a new $k$-pair.
For instance, consider 
$Q_1 = (\{1\}, \emptyset, \emptyset)$ and $\Gamma_1 = \{S\}$, where $S = (\emptyset, \{1\}, \emptyset)$,
as well as  
$Q_2 = (\emptyset, \emptyset, \{2\})$ and $\Gamma_2 = \emptyset$.
By definition of $g$, we obtain a new $k$-pair $(Q,\Gamma)$, where $Q = Q_1 \oplus Q_2 = (\{1\}, \emptyset, \{2\})$,
and $\Gamma$ contains the single element $Q_2 \oplus S = (\emptyset, \{1\}, \{2\})$.
Recall that the purpose of $Q$ is to represent sets of atoms $I$, and each element of $\Gamma$ shall represent proper subsets of $I$;
in this case, $Q$ represents $\{x\}$, and the element $Q_2 \oplus S$ in $\Gamma$ represents the proper subset $\emptyset$.
Next, at $g(\sigma)$ we introduce a negative edge from $x$ to $r$.
From the $k$-pair $(Q, \{S\})$ in $g(1(x) \oplus 2(r))$, where $Q = (\{1\}, \emptyset, \{2\})$ and $S = (\emptyset, \{1\}, \{2\})$,
we obtain the $k$-pair $(Q', \{S'\})$ in $g(\sigma)$,
where
$Q' = Q^{Q_T,i,j} = (\{1\}, \emptyset, \emptyset)$ (i.e., the label $2$ from $Q_U$ has disappeared)
and $S' = S^{Q_T,i,j} = (\emptyset, \{1\}, \emptyset)$.
Here $2$ has disappeared from $S_U$ because the reduct w.r.t.\ all sets of atoms represented by $Q'$ no longer contains any rule labeled with $2$.
Note that the classical model $\{x\}$ represented by $Q'$ is no answer set even though $Q'_U = \emptyset$.
The reason is that $S'$ witnesses (by $S'_U = \emptyset$) that $\emptyset \in \mods(\Pi^{\{x\}})$.
\end{example}

We now prove correctness of the algorithm from Definition~\ref{def:g}.

\begin{lemma}
\label{lem:g-sound-and-complete}
Let $\Pi$ be a program and $\theta$ be a $k$-expression for $\sincG(\Pi)$.
For every set $I \subseteq \A(\Pi)$ there is a $k$-pair $(Q,\Gamma) \in g(\theta)$ such that
(i) $I$ is a $\Pi$-interpretation of $Q$ and
(ii) for every set $J \subset I$ there is a $k$-triple $R \in \Gamma$ such that
$J$ is a $\Pi^I$-interpretation of $R$.
Moreover,
for every $k$-pair $(Q, \Gamma) \in g(\theta)$ there is a set $I \subseteq \A(\Pi)$ such that
(i') $I$ is a $\Pi$-interpretation of $Q$ and
(ii') for each $k$-triple $R \in \Gamma$, there is a set $J \subset I$ such that $J$ is a $\Pi^I$-interpretation of $R$.
\end{lemma}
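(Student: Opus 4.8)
The plan is to establish both assertions simultaneously by structural induction on the $k$-expression, phrased for an arbitrary subexpression $\sigma$ defining a program $\Pi_\sigma$ (with $\A(\Pi_\sigma)$ in place of $\A(\Pi)$ throughout). Everything concerning the first component $Q$ of a $k$-pair is exactly as in the proof of Lemma~\ref{lem:f-sound-and-complete}, so the work lies entirely in the second component $\Gamma$. I would first record one small observation that is used repeatedly: if $I$ is a $\Pi_\sigma$-interpretation of $Q$, then $I$ is also a $\Pi_\sigma^I$-interpretation of $Q$. Indeed, a rule $r$ with $n(r)\cap I\neq\emptyset$ satisfies $I\in\mods(r)$ trivially and contributes $r^+\notin\R(\Pi_\sigma^I)$, while for $n(r)\cap I=\emptyset$ one has $I\in\mods(r)$ iff $I\in\mods(r^+)$; hence the $U$-components in Definitions~\ref{def:pi-interpretation} and~\ref{def:pi-i-interpretation} coincide. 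I would also assume w.l.o.g.\ that in every edge operation $\edge{i}{j}{\ell}$ the vertices currently labelled $i$ are atoms and those labelled $j$ are rules; since $\sincG(\Pi)$ is bipartite between atoms and rules and edges are inserted only between the two sides, this can always be arranged (possibly by swapping $i$ and $j$ or dropping a vacuous operation).

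The base cases $\sigma=i(v)$ are immediate: for a rule node, $\A(\Pi_\sigma)=\emptyset$, so $I=\emptyset$ has no proper subset and $\Gamma=\emptyset$; for an atom node, $I$ is $\emptyset$ or $\{v\}$, and only the latter has a proper subset, namely $\emptyset$, which is a $\Pi_\sigma^{\{v\}}$-interpretation of $(\emptyset,\{i\},\emptyset)$. For $\sigma=\sigma_1\oplus\sigma_2$ and a given $I$, set $I_m=I\cap\A(\Pi_{\sigma_m})$ and take, via the induction hypothesis, $(Q_m,\Gamma_m)\in g(\sigma_m)$ with $I_m$ a $\Pi_{\sigma_m}$-interpretation of $Q_m$; then $(Q_1\oplus Q_2,\,R_{Q_1,Q_2,\Gamma_1,\Gamma_2})\in g(\sigma)$ and part (i) follows as for $f$. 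For (ii), a proper subset $J\subsetneq I$ decomposes as $J=J_1\cup J_2$ with $J_m=J\cap\A(\Pi_{\sigma_m})\subseteq I_m$ and at least one inclusion proper; the three possibilities are matched by the three groups $\{S_1\oplus S_2\}$, $\{Q_1\oplus S\}$, $\{S\oplus Q_2\}$ of $R_{Q_1,Q_2,\Gamma_1,\Gamma_2}$, using the induction hypothesis on the factor(s) where the inclusion is proper and the preliminary observation to let $Q_m$ stand in for $I_m$ in the other factor. Checking the $\Pi^I$-interpretation conditions of the resulting triple is then routine, the only point being that $n(r)\cap I=n(r)\cap I_m$ for every $r\in\R(\Pi_{\sigma_m})$, so reduct membership and satisfaction decompose over $\sigma_1,\sigma_2$. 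The relabelling case $\sigma=\relabel{i}{j}(\sigma')$ is pure bookkeeping, applying $\cdot^{\,i\rightarrow j}$ simultaneously to $Q$ and to every triple in $\Gamma$.

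The edge operations carry the real content. For $\ell\in\{h,p\}$, the operation $\edge{i}{j}{\ell}$ does not touch the negative body of any rule, hence does not change which $r^+$ lie in $\R(\Pi^I)$; consequently each $R\in\Gamma$, which represents a proper subset $J$, must be updated in exactly the way $Q$ is updated in the classical algorithm but with respect to $J$ rather than $I$ — namely $R^{R_T,i,j}$ for $\ell=h$ (an atom labelled $i$ lying in $J$ satisfies the enlarged head of a rule labelled $j$) and $R^{R_F,i,j}$ for $\ell=p$ (an atom labelled $i$ lying outside $J$ falsifies the enlarged positive body). The delicate case is $\ell=n$. Adding atoms labelled $i$ to the negative body of every rule $r$ labelled $j$ does not affect whether $J\in\mods(r^+)$ at all, since the reduct deletes the negative body; its only effect is on reduct membership, and this is governed by the outer interpretation: if $i\in Q_T$ then some atom labelled $i$ lies in $I$, so $n(r)\cap I\neq\emptyset$ and no rule labelled $j$ survives into $\Pi_\sigma^I$, whence $j$ must be removed from $R_U$ for \emph{every} $R\in\Gamma$; if $i\notin Q_T$ the set of rules labelled $j$ in the reduct, and each corresponding $r^+$, is unchanged, so nothing happens. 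This is precisely why Definition~\ref{def:g} uses $R^{Q_T,i,j}$ here — the $T$-component of the outer triple $Q$, not of $R$. I expect this to be the main obstacle: one must argue that the change in the reduct is caused by $I$ and never by the subset $J$, and that rules only ever leave the reduct (adding atoms to $n(r)$ can enlarge $n(r)\cap I$ but never empty it), so no spurious rule re-enters. Finally, the ``moreover'' direction runs through the same case list in reverse: a $k$-pair in $g(\sigma)$ is built from $k$-pair(s) in the $g(\cdot)$ of the immediate subexpression(s), the induction hypothesis supplies witnessing interpretations there, and these are assembled into the required $I$ and its subsets $J$; the one thing to watch is that, in the $\oplus$-case, a $J$ obtained by pairing some $Q_m$ with a member of $\Gamma_{3-m}$ is indeed a \emph{proper} subset of $I$, which holds because the $\Gamma_{3-m}$-part already contributes a proper inclusion.
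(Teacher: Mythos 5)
Your proposal is correct and follows essentially the same route as the paper's proof: structural induction reusing the Lemma~\ref{lem:f-sound-and-complete} arguments for the $Q$-component, the three-way case split on $J_1\subseteq I_1$, $J_2\subseteq I_2$ matching the three groups of $R_{Q_1,Q_2,\Gamma_1,\Gamma_2}$ in the $\oplus$-case, and the observation that in the $\edge{i}{j}{n}$-case reduct membership is governed by the outer $Q_T$ rather than by $R_T$. Your preliminary observation that a $\Pi_\sigma$-interpretation of $Q$ is automatically a $\Pi_\sigma^I$-interpretation of the same triple is a useful explicit justification of a step the paper dismisses with ``we can easily check.''
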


\begin{proof}
Observe that for each $(Q,\Gamma)$ in $g(\theta)$ it holds that $Q \in f(\theta)$, and for each $Q$ in $f(\theta)$ there is some $(Q,\Gamma)$ in $g(\theta)$.
Hence we can apply the same arguments as in the proof of Lemma~\ref{lem:f-sound-and-complete} for (i) and (i').
In addition, similar arguments can be used within each of the distinguished cases for (ii) and (ii').
We use induction on the structure of a $k$-expression $\theta$ defining $\Pi$.
Let $\sigma$ be a subexpression of $\theta$, let $\Pi_\sigma$ denote the program defined by $\sigma$, and let $I \subseteq \A(\Pi_\sigma)$.
\sv{%
We only prove some of the cases, which should suffice to extend the ideas from Lemma~\ref{lem:f-sound-and-complete} in a similar way to prove the other cases.%
}

\lv{%
\medskip\noindent
If $\sigma = i(r)$, for $r \in \R(\Pi)$, then
$\A(\Pi_\sigma) = \emptyset$, so $I = \emptyset$.
As in the proof of Lemma~\ref{lem:f-sound-and-complete},
we can show that
$I$ is a $\Pi_\sigma$-interpretation of the $k$-triple $Q = (\emptyset, \emptyset, \{i\})$.
Since $I$ has no proper subsets, condition (ii) holds trivially for the $k$-pair $(Q,\emptyset)$ in $g(\sigma)$.

\medskip\noindent
If $\sigma = i(a)$, for $a \in \A(\Pi)$, then
$\A(\Pi_\sigma) = \{a\}$ and $\R(\Pi_\sigma) = \emptyset$.
If $I = \emptyset$, then $I$ is a $\Pi_\sigma$-interpretation of the $k$-triple $Q = (\emptyset, \{i\}, \emptyset)$
and (ii) holds trivially for the $k$-pair $(Q,\emptyset)$ in $g(\sigma)$.
Otherwise $I = \{a\}$ holds.
Let $(Q,\Gamma)$ be the $k$-pair in $g(\sigma)$ for which $Q = (\{i\}, \emptyset, \emptyset)$ and $\Gamma = \{(\emptyset, \{i\}, \emptyset)\}$ hold.
Clearly $I$ is a $\Pi_\sigma$-interpretation of $Q$.
The only proper subset of $I$ is $\emptyset$, which is a $\Pi_\sigma^I$-interpretation of the only element of $\Gamma$.
}

\medskip\noindent
If $\sigma = \sigma_1 \oplus \sigma_2$, let
$\Pi_i = \Pi_{\sigma_i}$ and
$I_i = I \cap \A(\Pi_i)$,
for any $i \in \{1,2\}$.
By definition of $\Pi_i$, it holds that $\A(\Pi) = \A(\Pi_1) \cup \A(\Pi_2)$, $\R(\Pi) = \R(\Pi_1) \cup \R(\Pi_2)$ and $I = I_1 \cup I_2$.
Let $J \subset I$ and, for $i \in \{1,2\}$, let $J_i = J \cap \A(\Pi_i)$.
Observe that $J_1 \subseteq I_1$ and $J_2 \subseteq I_2$, and at least one inclusion is proper.
We distinguish three cases:
\begin{myitemize}
\item
If $J_1 \subset I_1$ and $J_2 \subset I_2$, then,
by induction hypothesis,
for any $i \in \{1,2\}$
there is a $k$-pair $(Q_i,\Gamma_i)$ in $g(\sigma_i)$ such that
$I_i$ is a $\Pi_i$-interpretation of $Q_i$ and
there is a $k$-triple $R_i \in \Gamma_i$ such that $J_i$ is a $\Pi_i^{I_i}$-interpretation of $R_i$.
By definition of $g$, there is a $k$-pair $(Q,\Gamma)$ in $g(\sigma)$ such that
$Q = Q_1 \oplus Q_2$ and
there is a $k$-triple $R$ in $\Gamma$ such that
$R = R_1 \oplus R_2$.
We can easily check that $I$ is a $\Pi_\sigma$-interpretation of $Q$ and that
$J$ is a $\Pi_\sigma^I$-interpretation of $R$.
\item
If $J_1 \subset I_1$ and $J_2 = I_2$, then,
by induction hypothesis,
there is a $k$-pair $(Q_1,\Gamma_1)$ in $g(\sigma_1)$ such that
$I_1$ is a $\Pi_1$-interpretation of $Q_1$ and
there is a $k$-triple $R_1 \in \Gamma_1$ such that $J_1$ is a $\Pi_1^{I_1}$-interpretation of $R_1$.
Also, there is a $k$-pair $(Q_2,\Gamma_2)$ in $g(\sigma_2)$ such that
$I_2$ is a $\Pi_2$-interpretation of $Q_2$.
By definition of $g$, there is a $k$-pair $(Q,\Gamma)$ in $g(\sigma)$ such that
$Q = Q_1 \oplus Q_2$ and
there is a $k$-triple $R$ in $\Gamma$ such that
$R = R_1 \oplus Q_2$.
We can easily check that $I$ is a $\Pi_\sigma$-interpretation of $Q$ and that
$J$ is a $\Pi_\sigma^I$-interpretation of $R$.

\item
The case $J_1 = I_1$, $J_2 \subset I_2$ is symmetric.
\end{myitemize}

\lv{%
\medskip\noindent
We omit the cases 
$\sigma = \relabel{i}{j}(\sigma')$,
$\sigma = \edge{i}{j}{h}(\sigma')$ and
$\sigma = \edge{i}{j}{p}(\sigma')$, as they are do not offer much additional insight, given the proof of Lemma~\ref{lem:f-sound-and-complete} and the following case.

}

\medskip\noindent
If $\sigma = \edge{i}{j}{n}(\sigma')$, then
there is again a $k$-pair $(Q', \Gamma')$ in $g(\sigma')$ such that
(i) $I$ is a $\Pi_{\sigma'}$-interpretation of $Q'$ and
(ii) for each $J \subset I$ there is a $k$-triple $R$ in $\Gamma'$ such that $J$ is a $\Pi_{\sigma'}^I$-interpretation of $R$.
Let $(Q,\Gamma)$ be the $k$-pair in $g(\sigma)$ for which
$Q = Q'^{Q'_T,i,j}$ and
$\Gamma = \{R^{Q'_T,i,j} \mid R \in \Gamma'\}$ hold.
As before, $I$ is a $\Pi_\sigma$-interpretation of $Q$.
Let $J \subset I$, let $R'$ be the $k$-triple in $\Gamma'$ such that $J$ is a $\Pi_{\sigma'}^I$-interpretation of $R'$,
and let $R = R'^{Q'_T,i,j}$.
As before, $J$ satisfies the first two conditions for being a $\Pi_\sigma^I$-interpretation of $R$.
It remains to check the third condition.

For all labels except $j$, we proceed as before.
We now check that $j \in R_U$ if and only if
there is a rule $r \in \R(\Pi_\sigma)$ such that
$\L_\sigma(r) = j$,
$n(r) \cap  I = \emptyset$
and $J \notin \mods(r^+)$.
First suppose toward a contradiction that
$j \in R_U$ while
$J \in \mods(r^+)$ for each $r \in \R(\Pi_\sigma)$ such that $\L_\sigma(r) = j$ and $n(r) \cap I = \emptyset$.
Since $R_U \subseteq R'_U$,
also $j \in R'_U$
and by induction hypothesis
there is a rule $r' \in \R(\Pi_{\sigma'})$ such that
$\L_{\sigma'}(r') = j$,
$n(r') \cap I = \emptyset$
and $J$ is not a model of $r'^+$.
There is a corresponding rule $r \in \R(\Pi_\sigma)$,
for which $\L_\sigma(r) = j$,
$h(r') = h(r)$, $p(r') = p(r)$ and $n(r') \subseteq n(r)$ hold.
Since $J$ is a model of $r^+$ but not of $r'^+$ (and these rules are identical),
there is an $a \in n(r) \cap I$ with $\L_\sigma(a) = i$.
From $a \in I$ it follows by induction hypothesis that
$i \in Q'_T$, but this leads to the contradiction $j \notin R_U$.

Finally, suppose toward a contradiction that
$j \notin R_U$ and there is a rule $r \in \R(\Pi_\sigma)$ such that
$\L_\sigma(r) = j$,
$n(r) \cap  I = \emptyset$
and $J \notin \mods(r^+)$.
Let $r'$ be the rule corresponding to $r$ in $\Pi_{\sigma'}$.
For this rule it holds that $\L_{\sigma'}(r') = j$ and $n(r') \subseteq n(r)$, so $n(r') \cap I = \emptyset$.
The set $J$ does not satisfy $r'^+$ either,
since $h(r'^+) = h(r^+)$ and $p(r'^+) = p(r^+)$.
By induction hypothesis, this entails $j \in R'_U$.
Due to $j \notin R_U$, it holds that $i \in Q'_T$, so
there is an atom $a \in I$ such that $\L_{\sigma'} = \L_\sigma(a) = i$.
Due to the new edge from $a$ to $r$,
$a \in n(r)$ holds, which contradicts $n(r) \cap I = \emptyset$.

\medskip\noindent
The other direction is similar.
\end{proof}

Hence we get an FPT result for answer-set semantics:

\begin{THE}
\label{thm:asp-fpt}
Let $k$ be a constant and $\Pi$ be a program.
Given a $k$-expression for the signed incidence graph of $\Pi$, we can decide in linear time whether $\Pi$ has an answer set.
\end{THE}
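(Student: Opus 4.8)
The plan is to follow exactly the pattern of the proof of Theorem~\ref{thm:classical-models-fpt}, now driving everything through the function $g$ and Lemma~\ref{lem:g-sound-and-complete} instead of $f$ and Lemma~\ref{lem:f-sound-and-complete}. Let $\sigma$ be the given $k$-expression of $\sincG(\Pi)$. The key step is to establish the characterisation: $\Pi$ has an answer set if and only if there is a $k$-pair $(Q,\Gamma)\in g(\sigma)$ with $Q_U=\emptyset$ and $R_U\neq\emptyset$ for every $R\in\Gamma$. The intuition, read off Definitions~\ref{def:pi-interpretation} and~\ref{def:pi-i-interpretation}, is that $Q_U=\emptyset$ forces any $\Pi$-interpretation $I$ of $Q$ to be a classical model of $\Pi$, whereas a triple $R\in\Gamma$ with $R_U=\emptyset$ would certify a proper subset $J\subsetneq I$ that is a model of the reduct $\Pi^I$; forbidding the latter is precisely the subset-minimality requirement in the definition of an answer set.

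For the forward direction I would take an answer set $I$ of $\Pi$ and apply the first part of Lemma~\ref{lem:g-sound-and-complete} to obtain a $k$-pair $(Q,\Gamma)\in g(\sigma)$ such that $I$ is a $\Pi$-interpretation of $Q$ and, for every $J\subsetneq I$, some $R\in\Gamma$ has $J$ as a $\Pi^I$-interpretation. Since $I\in\mods(\Pi)$, Definition~\ref{def:pi-interpretation} gives $Q_U=\emptyset$; and since no $J\subsetneq I$ belongs to $\mods(\Pi^I)$ while each $R\in\Gamma$ is realised by such a $J$, Definition~\ref{def:pi-i-interpretation} yields $R_U\neq\emptyset$ for every $R\in\Gamma$. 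For the converse I would start from a $k$-pair $(Q,\Gamma)\in g(\sigma)$ meeting the condition and invoke the second part of Lemma~\ref{lem:g-sound-and-complete} to get a set $I$ with $I$ a $\Pi$-interpretation of $Q$ and, for each $R\in\Gamma$, a proper subset $J\subsetneq I$ that is a $\Pi^I$-interpretation of $R$; then $Q_U=\emptyset$ makes $I$ a model of $\Pi$, and $R_U\neq\emptyset$ for all $R\in\Gamma$ together with the completeness of $\Gamma$ rules out any $J\subsetneq I$ with $J\in\mods(\Pi^I)$, so $I$ is an answer set. The running-time claim is then the same routine argument as in Theorem~\ref{thm:classical-models-fpt}: $|\mathcal{P}_k|$ is bounded by a function of $k$ only, so each recursion rule in Definition~\ref{def:g} is evaluated in time depending on $k$ alone, and a single bottom-up pass over the parse tree of $\sigma$ computes $g(\sigma)$ in time linear in the input, after which testing the characterisation costs only $O(1)$ extra time per $k$-pair.

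The step I expect to be the main obstacle is the converse direction, and more precisely the point that when $(Q,\Gamma)\in g(\sigma)$, the set $\Gamma$ must be \emph{complete} for the associated interpretation $I$ -- i.e.\ it contains a triple realising \emph{every} proper subset of $I$, not merely some of them -- so that ``$R_U\neq\emptyset$ for all $R\in\Gamma$'' genuinely certifies subset-minimality. This is exactly the content one has to extract carefully from Lemma~\ref{lem:g-sound-and-complete}: combining its two parts, or (more cleanly) sharpening it to the invariant that $g(\sigma)$ equals the set of all pairs $(Q,\Gamma)$ for which some $I\subseteq\A(\Pi_\sigma)$ is a $\Pi_\sigma$-interpretation of $Q$ and $\Gamma$ is exactly the set of triples realised by the proper subsets of $I$ -- which is what the inductive construction of $g$ actually maintains at the $\oplus$, relabel, and edge-insertion steps. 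Once this completeness is in place, the rest of the proof is a direct transcription of the classical-semantics argument.
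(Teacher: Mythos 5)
Your proposal is correct and takes essentially the same approach as the paper: the paper's proof consists precisely of the characterisation you state ($Q_U=\emptyset$ and $R_U\neq\emptyset$ for every $R\in\Gamma$ for some $(Q,\Gamma)\in g(\sigma)$), justified via Lemma~\ref{lem:g-sound-and-complete}, followed by the same linear-time argument for computing $g(\sigma)$. The subtlety you flag --- that $\Gamma$ must be \emph{complete} for the witnessing interpretation $I$, which requires combining or sharpening the two halves of Lemma~\ref{lem:g-sound-and-complete} into a single invariant --- is a genuine gap in the paper's terse write-up, and your proposed strengthening is the right way to close it.
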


\begin{proof}
Let $k$ be a constant, $\Pi$ be a program and $\sigma$ be a $k$-expression of $\sincG(\Pi)$.
Using the same ideas as for Theorem~\ref{thm:classical-models-fpt},
we can easily show that there is an answer set of $\Pi$
if and only if there is a $k$-pair $(Q,\Gamma)$ in $g(\sigma)$ such that
$Q_U=\emptyset$ and
$R_U \neq \emptyset$ for every $R \in \Gamma$.
Again, it is easy to see that $g(\sigma)$ can be computed in linear time.
\end{proof}

\subsection{The Role of Signs for Results on Clique-Width}\label{sec:hardscw}

We have shown 
that ASP parameterized by the
clique-width of the signed incidence graph is FPT.
Because the clique-width of the (unsigned) incidence graph
is usually smaller than (and always at most as high as) 
the clique-width of the signed incidence graph (Proposition~\ref{pro:unsigned-signed-cw}), an FPT
result w.r.t.\ the clique-width of the (unsigned) incidence graph would
be a significantly stronger result. It is therefore natural to ask
whether ASP is already FPT 
w.r.t\ the
clique-width of the unsigned incidence graph. A similar situation is known for the
satisfiability problem of propositional formulas (SAT), which was
first shown in~\cite{FischerMakowskyRavve06} to be
FPT parameterized by the clique-width of the
signed incidence graph, and the authors conjectured that the same should
hold already for the unsigned version. Surprisingly, this turned out
not to be the case~\cite{OrdyniakPaulusmaSzeider13}.
In comparison to SAT, the situation for ASP
is similar but slightly more involved. Whereas there are only two
potential signs for SAT (signaling whether a variable occurs
positively or negatively in a clause), ASP has three signs ($h$, $p$,
$n$). So how many signs are necessary to obtain tractability for
ASP? To settle this question, let $\sincG_{L}(\Pi)$, for $L \subseteq
\{h,p,n\}$, be the (``semi-signed'') incidence graph obtained from
$\sincG(\Pi)$ by joining all labels in $L$, i.e., every label
in $L$ is renamed to a new label, which we denote by $\alpha$. We will show below that
joining any set $L$ of labels other than $\{h,n\}$ leads to intractability
for ASP parameterized by the clique-width of
$\sincG_{L}(\Pi)$. Together with our tractability result w.r.t.\ the
clique-width of $\sincG(\Pi)$ (Theorem~\ref{thm:asp-fpt}), this
provides an almost complete picture of the 
complexity of ASP
parameterized by 
clique-width. 
We leave it as an open question whether ASP parameterized by
the clique-width of $\sincG_{\{h,n\}}(\Pi)$ is FPT.
\begin{THE}
  Let $L \subseteq \{h,p,n\}$ with $|L|>1$ and $L\neq \{h,n\}$, then
  ASP is $\W[1]$\hy hard parameterized by the clique-width of
  $\sincG_L(\Pi)$. 
\end{THE}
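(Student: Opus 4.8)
The plan is to reduce from the satisfiability problem for CNF formulas $\varphi$, parameterized by the clique-width $\cwd(\incG(\varphi))$ of the (unsigned) incidence graph of $\varphi$; this parameterized problem is $\W[1]$\hy hard by~\cite{OrdyniakPaulusmaSzeider13}. The hypothesis $|L|>1$, $L\neq\{h,n\}$ leaves $L\in\{\{h,p\},\{p,n\},\{h,p,n\}\}$, and the program $\Pi$ built below for $L=\{h,p\}$ also has $\sincG_{\{h,p,n\}}(\Pi)=\incG(\Pi)=\incG(\varphi)$, so it suffices to treat $L=\{h,p\}$ and $L=\{p,n\}$. In both cases the idea is to encode $\varphi$ by a program $\Pi$ such that (i) $\Pi$ has an answer set iff $\varphi$ is satisfiable, and (ii) the sign of a literal of $\varphi$ is recorded only through the choice between two edge labels of $\sincG(\Pi)$ that both belong to $L$; then this sign information vanishes in $\sincG_L(\Pi)$, and $\sincG_L(\Pi)$ becomes $\incG(\varphi)$ up to attaching a fixed bounded-size private gadget at each variable atom.

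For $L=\{h,p\}$ I would take the negation-free disjunctive program $\Pi$ with $\A(\Pi)=\{v_1,\dots,v_n\}$ containing, for each clause $C$ of $\varphi$, the rule $r_C$ with $h(r_C)=\{v_i\mid v_i\in C\}$ and $p(r_C)=\{v_i\mid\neg v_i\in C\}$. A routine verification gives $\mods(\Pi)=\mods(\varphi)$, and since $\Pi$ is negation-free its answer sets are exactly the subset-minimal models of $\varphi$; hence $\Pi$ has an answer set iff $\varphi$ is satisfiable. As $\Pi$ has no negative-body occurrences, every edge of $\sincG(\Pi)$ carries label $h$ or $p$, so $\sincG_{\{h,p\}}(\Pi)$ has all edges labeled $\alpha$ and equals $\incG(\varphi)$; thus $\cwd(\sincG_{\{h,p\}}(\Pi))=\cwd(\incG(\varphi))$, the map $\varphi\mapsto\Pi$ is a parameterized reduction, and the same $\Pi$ also settles $L=\{h,p,n\}$.

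For $L=\{p,n\}$ a program consisting only of constraints cannot guess an assignment, so I would add a choice gadget: let $\A(\Pi)=\{v_i,\hat v_i\mid i\in[n]\}$, for each $i$ put the disjunctive rule $\delta_i\colon v_i\vee\hat v_i\leftarrow$, and for each clause $C$ put the constraint $r_C$ with $p(r_C)=\{v_i\mid\neg v_i\in C\}$ and $n(r_C)=\{v_i\mid v_i\in C\}$. One verifies that the answer sets of $\Pi$ are exactly the sets $\{v_i\mid\beta(v_i)=1\}\cup\{\hat v_i\mid\beta(v_i)=0\}$ for satisfying assignments $\beta$ of $\varphi$, so $\Pi$ has an answer set iff $\varphi$ is satisfiable. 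In $\sincG_{\{p,n\}}(\Pi)$ each edge between $v_i$ and a constraint $r_C$ carries the merged label $\alpha$, whether $v_i\in C$ (originally $n$) or $\neg v_i\in C$ (originally $p$); so, forgetting edge labels, $\sincG_{\{p,n\}}(\Pi)$ is $\incG(\varphi)$ plus, for every $i$, the fixed three-vertex path $v_i$--$\delta_i$--$\hat v_i$ that meets the rest of the graph only in $v_i$. It then remains to show $\cwd(\sincG_{\{p,n\}}(\Pi))\le\cwd(\incG(\varphi))+\bigO{1}$: starting from an optimal cwd-expression for $\incG(\varphi)$, at the leaf introducing a variable atom $v_i$ one splices in a short subexpression that creates $v_i,\hat v_i,\delta_i$ under fresh labels, inserts the two $h$-labeled gadget edges, relabels $\hat v_i$ and $\delta_i$ to a single dedicated ``trash'' label, and relabels $v_i$ back to the label it carried in the original expression; this uses only $\bigO{1}$ extra labels, so $\varphi\mapsto\Pi$ is again a parameterized reduction.

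I expect the clique-width bound for $L=\{p,n\}$ to be the only genuinely delicate point. Care is needed because the three gadget vertices of $v_i$ must be created under labels that no vertex currently carries -- otherwise the two gadget edge-insertions would also connect $v_i$ to foreign vertices -- which forces $v_i$ to be introduced under a fresh label and relabeled afterwards; and the ``trash'' label onto which $\hat v_i$ and $\delta_i$ are finally parked must never reappear as an endpoint label of an edge-insertion in the (unchanged) $\incG(\varphi)$-part of the expression, so that distinct gadgets stay isolated from one another. By contrast, the semantic equivalences in (i) are routine reduct computations, and the clique-width claim for $L=\{h,p\}$ is immediate.
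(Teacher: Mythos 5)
Your proof is correct, but it takes a genuinely different route from the paper's. The paper reduces directly from \textsc{Partitioned Clique}: for $L=\{p,n\}$ it builds a bespoke program whose disjunctive rules $v_1^j\vee\cdots\vee v_n^j\leftarrow$ guess one vertex per part and whose constraints (one per non-edge) carry the two endpoints in the positive body and \emph{all remaining vertices of the two parts} in the negative body, so that answer-set minimality enforces ``exactly one vertex per part''; it then exhibits an explicit $(2k+k^2)$-expression for $\sincG_{\{p,n\}}(\Pi)$, and the $L=\{h,p\}$ case is only sketched as being analogous to \cite[Corollary 1]{OrdyniakPaulusmaSzeider13}. You instead use the $\W[1]$-hardness of SAT parameterized by the clique-width of the unsigned incidence graph (which is precisely what \cite{OrdyniakPaulusmaSzeider13} establishes, itself via \textsc{Partitioned Clique}) as a black box, and push it through two generic CNF-to-program translations that preserve the incidence graph up to constant-size pendant gadgets. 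This is more modular and makes the conceptual point transparent -- ASP inherits SAT's hardness as soon as the $p$/$n$ (or $h$/$p$) distinction is erased -- whereas the paper's version is self-contained and yields a concrete width bound without depending on the exact parameterization in the cited statement. Your semantic verifications are right (for the positive program the answer sets are the minimal models of $\varphi$; the $v_i\vee\hat v_i\leftarrow$ gadget pins down exactly one atom per pair in any answer set, since the superfluous one could otherwise be dropped from a model of the reduct), and the expression surgery for $L=\{p,n\}$ is sound as sketched: each gadget subterm is evaluated in isolation, so the three gadget vertices only need pairwise distinct temporary labels there, plus one global ``trash'' label outside $[k]$ that the original expression never mentions, giving $\cwd(\sincG_{\{p,n\}}(\Pi))\le\max(\cwd(\incG(\varphi)),3)+1$, which suffices for a parameterized reduction (the reduction itself never has to compute an optimal expression).
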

\begin{proof}
  We show the result by a parameterized reduction from the 
$W[1]$\hy complete problem 
\textsc{Partitioned
    Clique}
~\cite{Pietrzak03}.
  \begin{quote}
   %
    \emph{Instance:} A $k$\hy partite graph $G=(V,E)$ with partition
    $V_1,\dotsc,V_k$ where $|V_i|=|V_j|$ for every $i,j$ with $1
    \leq i,j \leq k$.
    
    \emph{Parameter:} The integer $k$. 
    
    \emph{Question:} Does $G$ have a clique of size $k$?
  \end{quote}
  Recall that a $k$-partite graph is a graph whose vertex set can be
  partitioned into $k$ sets such that there are no edges between
  vertices contained in the same set.
%
	To prove the theorem it is sufficient to show
  that the result holds for $L$ being any combination of two labels other than
  $\{h,n\}$. In other words, it suffices to show the result for
  $L=\{h,p\}$ and $L=\{p,n\}$. Because the reduction for the case that $L=\{h,p\}$ is very similar to the reduction from
  \textsc{Partitioned Clique} to SAT given
  in~\cite[Corollary 1]{OrdyniakPaulusmaSzeider13}, we omit its proof here and only give the proof for the case that $L=\{p,n\}$. 
  Let $L=\{p,n\}$ and $G$, $k$,
  $V_1,\dotsc,V_k$ be as in the definition of \textsc{Partitioned
    Clique} and assume that the vertices of $G$ are labeled such that
  $v_i^j$ is the $i$-th vertex contained in $V_j$. We will construct a program
  $\Pi$ in polynomial-time such that $G$ has a clique of size $k$ if
  and only if $\Pi$ has an answer set, and the clique-width of
  $\sincG_L(\Pi)$ is at most $k'=2k+k^2$. 

  The program
  $\Pi$ contains one atom $v_i^j$ 
  for every vertex $v_i^j$ of $G$, 
 and the
  following rules:

 \noindent
        (R1) For every $j$ with $1\leq j \leq k$, the rule:
    $v_1^j \vee \cdots \vee v_n^j \leftarrow$.

 \noindent
        (R2)
        The rule:
        \begin{align*}
          \leftarrow v_{i_1}^{j_1}, v_{i_2}^{j_2}, & \neg v_1^{j_1}, \dotsc
          ,\neg v_{i_1-1}^{j_1}, \neg v_{i_1+1}^{j_1}, \dotsc , \neg v_{n}^{j_1},\\
          &\neg v_1^{j_2}, \dotsc, \neg v_{i_2-1}^{j_2}, \neg
          v_{i_2+1}^{j_2}, \dotsc ,\neg v_{n}^{j_2}
        \end{align*}
        for every $\{v_{i_1}^{j_1},v_{i_2}^{j_2}\} \notin E(G)$
        with $1\leq j_1,j_2 \leq k$ and $1\leq i_1,i_2 \leq n$.
%
\smallskip

  We first show that $G$ has a clique of size $k$ if and only if
  $\Pi$ has an answer set. Toward showing the forward direction, let
  $C$ be a clique of size $k$ of $G$. We
  claim that $C$ is also an answer set of $\Pi$ and first show
  that $C$ is indeed a model of $\Pi$. 
  Because $C$ contains
  exactly one vertex from every $V_i$, all rules of type (R1) are
  satisfied by $C$. Moreover, the same holds for all rules of type
  (R2), because there is no pair $u,v \in C$ with $\{u,v\} \notin
  E(G)$ and hence the body of every such rule is always
  falsified. This shows that $C$ is a model of $\Pi$. 
  Finally, because all the rules of type (R1) are also
  contained in the reduct $\Pi^C$ of $\Pi$, we obtain that $C$ is
  an answer set of $\Pi$.

  Toward showing the reverse direction, let $C$ be an answer set of
  $\Pi$. We claim that $C$ is also a clique of $G$ of size $k$ and
  first show that $C$ contains exactly one variable from every $V_i$. 
  Because of the
  rules of type (R1) (which will also remain in the reduct $\Pi^C$), it
  holds that $C$ contains at least one variable corresponding to a
  vertex of $V_i$ for every $i$ with $1\leq i \leq k$. 
  Assume for a contradiction that $C$ contains more than one variable from some $V_i$. Then for every $j$ with $j \neq i$, $C$ has to contain at least three variables from $V_i \cup V_j$. Consequently,
  every rule of type (R2) corresponding to a non-edge of $G$ incident
  to a vertex in $V_i$ does not appear in the reduct $\Pi^C$ of $\Pi$,
  which shows that $C$ is not an answer set (minimal model) of
  $\Pi^C$.
  This
  shows that $C$ contains exactly one variable from every
  $V_i$. Now, suppose for a contradiction that $C$ is not a clique of
  $G$ of size $k$. Then there are $u$ and $w$ in $C$ with $u \in V_i$
  and $w \in V_j$ such that $\{u,w\} \notin E(G)$. Hence, there is a
  rule of type (R2) (corresponding to the non-edge $\{u,v\}$), which
  is violated by $C$, a contradiction to our assumption that $C$ is
  model of $\Pi$.

  It remains to show that the clique-width of 
  $\sincG_{L}(\Pi)$ is at most $k'=2k+k^2$. We show this by providing a $k'$-expression for
  $\sincG_{L}(\Pi)$.
  We start by giving the terms that introduce the vertices of
  $\sincG_{L}(\Pi)$:
(1) We introduce every atom vertex $v_i^j$ of
    $\sincG_{L}(\Pi)$ with the term $j(v_i^j)$.
(2) For every rule vertex $r$ of the form (R1) corresponding to a
    rule $v_1^j \vee \cdots \vee v_n^j \leftarrow$, we introduce the
    term $l(r)$ where $l=k+j$.
(3) For every rule vertex $r$ of the form (R2) corresponding to a
    non-edge between $V_i$ and $V_j$ with $1 \leq i < j \leq k$, we
    introduce the term $l(r)$, where $l=2k+k(i-1)+j$.
  We then combine all these terms using the disjoint union operator
  $\oplus$. Now it only remains to show how the edges between the
  rule and the atom vertices of $\sincG_{L}(\Pi)$ are added:
First, 
for every $j$ with $1 \leq j \leq k$, we add the edges between
    the rule vertices of the form (R1) and the atom vertices contained
    in those rules using the operator $\edge{j}{k+j}{\ell}$, where $\ell=h$.
Second, 
for every $i$ and $j$ with $1 \leq i < j \leq k$, we add the edges between
    the rule vertices of the form (R2) and the atom vertices contained
    in those rules using the operators $\edge{i}{2k+k(i-1)+j}{\alpha}$
    and $\edge{j}{2k+k(i-1)+j}{\alpha}$.

\end{proof}

\section{Conclusion}
In this paper, 
we have contributed to the parameterized complexity analysis of ASP.
We first gave some negative observations showing that 
most directed width measures (applied to the dependency
graph or incidence graph of a program) do not lead to FPT results.
On the other hand, we turned a theoretical
tractability result (which implicitly follows from 
previous work~\cite{GottlobPichlerWei10})
for the parameter clique-width (applied to the signed incidence graph of a program)
into a novel dynamic programming algorithm. 
The algorithm is applicable to arbitrary programs, 
whenever a defining $k$-expression is given. 
The algorithm is expected to run efficiently in particular for small~$k$, i.e.,
programs for which the signed incidence graph has low clique-width.

Future work includes 
solving the remaining question 
whether ASP parameterized by the clique-width of
  $\sincG_{\{h,n\}}(\Pi)$ is FPT or not.
Another open question is whether
ASP parameterized by the clique-width of the unsigned incidence graph is in the class XP
(as is the case for SAT~\cite{SlivovskySzeider13}).

\paragraph{Acknowledgments.} This work was supported by the Austrian Science Fund (FWF) projects P25518 and Y698.

\newpage

\sv{\bibliographystyle{ecai}
\bibliography{literature}
}

\lv{

}

\end{document}